\documentclass{article}

\usepackage{arxiv}


\usepackage[utf8]{inputenc} 
\usepackage[T1]{fontenc}    
\usepackage{hyperref}       
\usepackage{url}            
\usepackage{booktabs}       
\usepackage{amsfonts}       
\usepackage{amsmath,amsthm,amssymb}
\usepackage{nicefrac}       
\usepackage{microtype}      
\usepackage{cleveref}       
\usepackage{lipsum}         
\usepackage{graphicx}
\usepackage[square,numbers]{natbib}
\usepackage{doi}


\usepackage{algpseudocode}
\usepackage{algorithm}
\usepackage{subcaption}
\usepackage{comment}
\usepackage{bm}
\usepackage{chngcntr}
\usepackage{bibunits}
\usepackage{multirow}

\newtheorem{theorem}{Theorem}%
%

%
%



\newcommand{\E}[1]{\mathbb{E}\left[#1\right]}
\newcommand{\prob}[1]{\text{Pr}\left\{#1\right\}}
\newcommand{\cond}{\,\bigg\vert\,}

\newtheorem{lemma}[theorem]{Lemma}
\counterwithin*{theorem}{section}

\title{Online Learning of Network Bottlenecks via Minimax Paths}

\bibliographystyle{plain}

\rhead{}

\hypersetup{
pdftitle={Online Learning of Network Bottlenecks via Minimax Paths},
pdfsubject={cs.LG, stat.ML},
pdfauthor={Niklas {\AA}kerblom, Fazeleh Sadat Hoseini, Morteza Haghir Chehreghani},
pdfkeywords={Online Learning, Combinatorial Semi-bandit,Thompson Sampling, Bottleneck Identification},
}

\begin{document}

\author{Niklas {\AA}kerblom \\
	Volvo Car Corporation \\
	Gothenburg, Sweden \\[3pt] 
	Department of Computer Science and Engineering\\
	Chalmers University of Technology\\
	Gothenburg, Sweden \\
	\texttt{niklas.akerblom@chalmers.se} \\
	\And
	Fazeleh Sadat~Hoseini \\
	Department of Computer Science and Engineering\\
	Chalmers University of Technology\\
	Gothenburg, Sweden \\
	\texttt{fazeleh@chalmers.se} \\
    \And
	Morteza Haghir~Chehreghani \\
	Department of Computer Science and Engineering\\
	Chalmers University of Technology\\
	Gothenburg, Sweden \\
	\texttt{morteza.chehreghani@chalmers.se} \\
}

\maketitle

\begin{abstract}
In this paper, we study bottleneck identification in networks via extracting minimax paths. Many real-world networks have stochastic weights for which full knowledge is not available in advance. Therefore, we model this task as a combinatorial semi-bandit problem to which we apply a combinatorial version of Thompson Sampling and establish an upper bound on the corresponding Bayesian regret. Due to the computational intractability of the problem, we then devise an alternative problem formulation which approximates the original objective. Finally, we experimentally evaluate the performance of Thompson Sampling with the approximate formulation on real-world directed and undirected networks.
\end{abstract}

\keywords{Online Learning \and Combinatorial Semi-bandit \and Thompson Sampling \and Bottleneck Identification}

\section{Introduction}
Bottleneck identification constitutes an important task in network analysis, with applications including transportation planning and management \citep{berman1987optimal}, routing in computer networks \citep{shacham1992multicast} and various bicriterion path problems \citep{hansen80}.
The path-specific bottleneck, on a path between a source and a target node in a network, is defined as the edge with a maximal cost or weight according to some criterion such as transfer time, load, commute time, distance, etc.
The goal of bottleneck identification and avoidance is then to find a path whose bottleneck is minimal. Thus, one may model bottleneck identification as the problem of computing the minimax edge over the given network/graph, to obtain an edge with a minimal largest gap between the source and target nodes. Equivalently, it can be formulated as a widest path problem or maximum capacity path problem \citep{pollack1960letter} where the edge weights have been negated.

In transportation, identifying bottlenecks is important for city governments and traffic managers to monitor and improve overall performance. For example, when driving between two locations, identifying a bottleneck means finding the section of road that slows down the traffic between two locations. 
Consider a social network where there is an information flow between two nodes (source and destination). Then we can define the bottleneck as a link in all the paths between these two nodes that has the weakest connection and causes the information flow to fail. So, bottlenecks here are related to the possibility of information loss.

The aforementioned formulations assume that the network or the graph is fully specified, i.e., that all the edge weights are fully known. However, in practice, the edge weights might not be known in advance or they might include some inherent uncertainty. In this paper, we tackle such situations by developing an online learning framework to learn the edge weight distributions of the underlying network, while solving the bottleneck identification problem for different problem instances.

For example, in the transportation scenario, city governments often have access to fleets of vehicles utilized for various municipal services. These may be used to sequentially and continuously to gain knowledge about traffic flow from the environment, while it is still desirable to avoid causing unnecessary inconvenience and stress \cite{hennessy1999} to the employees operating the vehicles by excessively exploring congested paths. If care is taken to spread the costs over time, exploration may be performed continuously without having a specific end time known in advance (i.e., the \emph{time horizon} of the sequential decision making problem).

For this purpose, we view this as a multi-armed bandit (MAB) problem and focus on Thompson Sampling (TS) \citep{thompson1933}, a method that suits probabilistic online learning well. Thompson Sampling is an early Bayesian method for addressing the trade-off between exploration and exploitation in sequential decision making problems. It balances these by randomly sampling available actions according to their posterior probability of being optimal, given prior beliefs and observations from previously selected actions. An action is more likely to be sampled if the posterior distribution over the expected reward of that action has high uncertainty (exploration) or high mean (exploitation). 

The method has only recently been thoroughly evaluated through experimental studies \citep{ChapelleL11, GraepelCBH10} and theoretical analyses \citep{KaufmannKM12, agrawal2012analysis, russo2014learning}, where it has been shown to be asymptotically optimal in the sense that it matches well-known lower bounds of these types of problems \citep{lai1985asymptotically}. Furthermore, the algorithm does not assume knowledge of the time horizon, i.e., it is an \emph{anytime} algorithm.

Among many other problem settings, Thompson Sampling has been adapted to online versions of combinatorial optimization problems with retained theoretical guarantees \citep{wang2018thompson}, where one application is to find shortest paths in graphs \citep{liu2012adaptive, gai2012combinatorial, zou2014online, ijcai2020-0284}. 

Another commonly used method for these problems is Upper Confidence Bound (UCB) \citep{Auer02}, which utilizes optimism to balance exploration and exploitation. UCB has been adapted to combinatorial settings \citep{chen2013combinatorial}, and also exists in Bayesian variants \citep{kaufmann2012bayesian}. Recently, a variant of UCB has been studied for bottleneck avoidance problems in a combinatorial pure exploration setting \citep{du2021combinatorial}. They consider a different problem setting and method than those we present in this paper, though their bottleneck reward function is similar to the one we use in our approximation method. The main difference between their setting and the standard combinatorial semi-bandit setting in how agents interact with the environment, is that instead of being restricted to selecting sets of actions respecting combinatorial constraints, they allow agents to sequentially try individual arms to identify the best feasible solution to the combinatorial problem. This is not applicable to our setting, since we may not observe the feedback of individual edges without also traversing a path containing those edges, potentially incurring cost from some other edge on that path.

Moreover, the objective in a pure exploration problem is to find the best action as quickly as possible, with either a fixed time budget or confidence level, using agents dedicated for this task. While identifying the best path is desirable in our problem setting as well, we are specifically interested in the case where existing agents are utilized and where using them exclusively for exploration is too costly. For that reason, we focus on \emph{anytime} methods capable of distributing exploratory actions over time. 

In this paper, we model the online bottleneck identification task as a stochastic combinatorial semi-bandit problem, for which we develop a combinatorial variant of Thompson Sampling. We then derive an upper bound on the corresponding Bayesian regret that is tight up to a polylogarithmic factor, which is consistent with the existing lower bounds for combinatorial semi-bandit problems. We face the issue of computational intractability with the \emph{exact} problem formulation. We thus propose an \emph{approximation} scheme, along with a theoretical analysis of its properties. 
Finally, we experimentally investigate the performance of the proposed method on directed and undirected real-world networks from transport and collaboration domains.

\section{Bottleneck Identification Model}
In this section, we first introduce the bottleneck identification problem over a fixed network and then describe a probabilistic model to be used in stochastic and uncertain situations. 

\subsection{Bottleneck identification over a network}
We model a network by a graph $ G( V, E, w )$, where $V$ denotes the set of vertices (nodes) and each $e = (u,v) \in E$ indicates an edge between vertices $u$ and $v$ where $u,v \in V$ and $u \neq v$.
Moreover, $w: E\rightarrow \mathbb{R}$ is a weight function defined for each edge of the graph, where for convenience, we use $w_e$ to denote the weight of edge $e$.
If $G$ is directed, the pair $(u,v)$ is ordered, otherwise, it is not (i.e., $(u,v)\equiv (v,u)$ for undirected graphs).
A path $p$ from vertex $u$ (source) to vertex $v$ (target) over $G$ is a sequence of vertices $<v_1, v_2, \dots,v_{k-1}, v_k >$ where $v_1=u$, $v_k=v$ and $(v_i,v_{i+1}) \in E, \forall i \in [1, k-1]$. 
It can also be seen as a sequence of edges $< (v_1, v_2),(v_2,v_3), \dots,(v_{k-1}, v_k) >$.

As previously mentioned, a bottleneck on a path $p$ can be described as an edge with a maximal weight on that path. 
To find the smallest feasible bottleneck edge between the source node $u$ and the target node $v$, we consider all the paths between them. For each path, we pick an edge with a maximal weight, to obtain all path-specific bottleneck edges. We then identify the smallest path-specific bottleneck edge in order to find the best feasible bottleneck edge, i.e., such that bottleneck edges with higher weights are avoided.  

Therefore, given graph $G$, the bottleneck edge between $u \in V$ and $v \in V$ can be identified via extracting the minimax edge between them. With $P_{u,v}$ denoting the set of all possible paths from  $u$ to $v$ over $G$, the bottleneck weight (incurred by the bottleneck edge) can be computed by

\begin{equation}\label{eq:MM}
     b(u,v;G) = \min_{p \in P_{u,v}} \max_{e\in p} w_e .
\end{equation}

The quantity in Eq. \ref{eq:MM} satisfies the  (ultra) metric properties under some basic assumptions on the edge weights such as symmetry and nonnegativity. Hence, it is sometimes used as a proper distance measure to extract manifolds and elongated clusters in a non-parametric way \citep{Chehreghani20Minimx, KimC07}.

However, in our setting, such conditions  do not need to be fulfilled by the edge weights. In general,  we tolerate positive as well as negative edge weights, and  we assume the graph might be directed, i.e., the edge weights are not necessarily symmetric. Therefore, despite the absence of (ultra) metric properties, the concept of minimax edges is still relevant for bottleneck identification.

To compute the minimax edge, one does not need to investigate all possible paths between the source and target nodes, which might be computationally infeasible. As studied in \cite{Hu61}, minimax edges and paths over an arbitrary undirected  graph are equal to the minimax edges over any minimum spanning tree (MST) computed over that graph. This equivalence simplifies the calculation of minimax edges, as there is only one path between every two vertices over an MST, whose maximal edge weight yields the minimax edge, i.e., the desired bottleneck.

For directed graphs, an MST might not represent the minimax edges in a straightforward manner. Hence, we instead rely on a modification \citep{berman1987optimal} of Dijkstra's algorithm \citep{dijkstra1959note} to extract minimax paths rather than the shortest paths.

\subsection{Probabilistic model for bottleneck identification}

In this paper, we study bottleneck identification in uncertain and stochastic settings. Therefore, instead of considering the weights $w_e$ for $e \in E$ to be fixed, we view them as stochastic with fixed, albeit unknown, distribution parameters. Additionally, we assume that the weight of each edge follows a Gaussian distribution with known and finite variance. The Gaussian edge weight assumption is common for many important problem settings, like minimization of travel time \citep{seshadri2010algorithm} or energy consumption \citep{ijcai2020-0284} in road networks. Furthermore, we assume that all edge weights are mutually independent. Hence,

\begin{equation*}
    w_e \sim \mathcal{N}(\theta_e^*, \sigma_e^2) ,
\end{equation*}

where $\theta_e^*$ denotes the unknown mean of edge $e$, and $\sigma_e^2$ is the known variance. To reduce cumbersome notation in the proofs, since the variance is assumed to be finite, we let $\sigma_e^2 \leq 1$ (by scaling the edge weight distributions). However, we emphasize that we \emph{do not} assume that $w_e$ and $\theta_e^*$ are bounded or non-negative.

It is convenient to be able to make use of prior knowledge in online learning problems where the action space is large, which motivates a Bayesian approach where we assume that the unknown mean $\theta^*_e$ is sampled from a known prior distribution:

\begin{equation*}
    \theta_e^* \sim \mathcal{N}(\mu_{e,0}, \varsigma_{e,0}^2) .
\end{equation*}

We use a Gaussian prior for $\theta_e^*$ since it is conjugate to the Gaussian likelihood and allows for efficient recursive updates of posterior parameters upon a new weight observation $w_{e,t}$ at time $t$:

\begin{align}
    &\varsigma_{e,t+1}^2 \leftarrow \left(\frac{1}{\varsigma_{e,t}^2} + \frac{1}{\sigma^2_e}\right)^{-1} ,\label{eq:posterior_variance}\\ 
    &\mu_{e,t+1} \leftarrow \varsigma_{e,t+1}^2 \left(\frac{\mu_{e,t}}{\varsigma_{e,t}^2} + \frac{w_{e,t}}{\sigma^2_e}\right) . \label{eq:posterior_mean}
\end{align}

Since our long-term objective is to find a path which minimizes the \emph{expected} maximum edge weight along that path, we need a \emph{framework} to sequentially select paths to update these parameters and learn enough information about the edge weight distributions.

The assumptions in this section might seem restrictive, and indeed, when the edge weights represent e.g., traffic congestion in a road network, it is reasonable to believe that edges are not independent, especially for neighboring road segments. There are ways of extending this setting to capture such dependencies, while retaining similar regret guarantees for the studied methods. Such extensions include the contextual setting, where expected edge weights are assumed to follow parameterized functions of contextual features (e.g., time-of-day, local ambient temperature, precipitation) revealed to the agent in each time step, before each action is taken. We leave such extensions to future work, though we note that the proofs in this work may be extended in a straightforward manner, analogous to the analysis of linear contextual Thompson Sampling in \cite{russo2014learning}. Similarly, Thompson Sampling may be extended to the case where both the mean and variance are unknown, by assignment of a joint prior distribution over the parameters \cite{riquelme2018deep}. 

\section{Online Bottleneck Learning Framework}

Consider a stochastic combinatorial semi-bandit problem \citep{cesa2012combinatorial} with time horizon $T$, formulated as a problem of cost minimization rather than reward maximization. There is a set of base arms $\mathcal{A}$ (where we let $d := \vert \mathcal{A} \vert$) from which we may, at each time step $t \in [T]$, select a subset (or \emph{super arm}) $\bm{a}_t \subseteq \mathcal{A}$. The selection is further restricted such that $\bm{a}_t \in \mathcal{I} \subseteq 2^\mathcal{A}$, where $\mathcal{I}$ is called the set of \emph{feasible} super arms.

Upon selection of $\bm{a}_t$, the environment reveals a feedback $X_{i,t}$ drawn from some fixed and unknown distribution for each base arm $i \in \bm{a}_t$ (i.e., \emph{semi-bandit} feedback). Furthermore, we then receive a super arm cost from the environment, $c(\bm{a}_t) := \max_{i \in \bm{a}_t} X_{i,t}$, i.e., the maximum of all base arm feedback for the selected super arm and the current time step. The objective is to select super arms $\bm{a}_t$ to minimize $\E{ \sum_{t=1}^{T} c(\bm{a}_t)}$. This objective is typically reformulated as an equivalent \emph{regret} minimization problem, where the (expected) regret is defined as

\begin{equation}
    \text{Regret}(T) := \left(\sum_{t\in [T]} \E{c(\bm{a}_t)}\right) - T \cdot \min_{\bm{a} \in \mathcal{I}} \E{c(\bm{a})} . \label{eq:exact_regret}
\end{equation}

To connect this to the probabilistic bottleneck identification model introduced in the previous section, we let each edge $e \in E$ in the graph $G$ correspond to exactly one base arm $i \in \mathcal{A}$. For the online minimax path problem, the feasible set of super arms is then the set of all admissible paths in the graph, where the paths are directed or undirected depending on the type of graph. The feedback of each base arm $i$ is simply the Gaussian weight of the matching edge $e$, with known variance $\sigma_i^2$ and unknown mean $\theta^*_i$. 

We denote the expected cost of a super arm $f_{\bm{\theta}}(\bm{a})$, where $\bm{\theta}$ is a mean vector and $f_{\bm{\theta}}(\bm{a}) := \E{\max_{i \in \bm{a}} C_i}$ such that $C_i \sim \mathcal{N}(\theta_i, \sigma_i^2)$. For Bayesian bandit settings and algorithms, it is common to consider the notion of \emph{Bayesian regret}, with an additional expectation over problem instances drawn from the prior distribution (where we denote the prior distribution $\lambda$, over mean vectors $\bm{\theta}^*$):

\begin{equation}
    \text{BayesRegret}(T) := \mathbb{E}_{\bm{\theta}^* \sim \lambda} \left[ \E{ \left(\sum_{t \in [T]} f_{\bm{\theta}^*} (\bm{a}_t)\right) - T \cdot \min_{\bm{a} \in \mathcal{I}} f_{\bm{\theta}^*} (\bm{a}) \cond \bm{\theta}^*} \right] .
\end{equation}

\subsection{Thompson Sampling with exact objective}

It is not sufficient to find the super arm $\bm{a}$ which minimizes $f_{\bm{\mu}_t}(\bm{a})$ in each time step $t$, since a strategy which is \emph{greedy} with respect to possibly imperfect current cost estimates may converge to a sub-optimal super arm. Thompson Sampling is one of several methods developed to address the trade-off between exploration and exploitation in stochastic online learning problems. It has been shown to exhibit good performance in many formulations, e.g., linear contextual bandits and combinatorial semi-bandits.

The steps performed in each time step $t$ by Thompson Sampling, adapted to our setting, are described in Algorithm \ref{alg:exact_algorithm}. First, a mean vector $\Tilde{\bm{\theta}}$ is sampled from the current posterior distribution (or from the prior in the first time step). Then, an arm $\bm{a}_t$ is selected which minimizes the expected cost $f_{\Tilde{\bm{\theta}}} (\bm{a}_t)$ with respect to the sampled mean vector. These first two steps are equivalent to selecting the arm according to the posterior probability of it being optimal. In combinatorial semi-bandit problems, the method of finding the best super arm according to the sampled parameters is often called an \emph{oracle}.

When the super arm $\bm{a}_t$ is played, the environment reveals the feedback $X_{i,t}$ if and only if $i \in \bm{a}_t$, which is a property called \emph{semi-bandit feedback}. Finally, these observations are used to update the posterior distribution parameters.

\begin{algorithm}[ht]
\caption{TS for minimax paths (exact)}
\label{alg:exact_algorithm}
\textbf{Input}: Prior parameters $\bm{\mu}_0, \bm{\varsigma}_0$
\begin{algorithmic}[1] 
\State  For each base arm, play a super arm which contains it.
\For{$t \leftarrow 1, \dots, T$}
\For{$i \in \mathcal{A}$}
\State  $\Tilde{\theta}_i \leftarrow $ Sample from posterior $\mathcal{N}\left(\mu_{i,t-1}, \varsigma_{i,t-1}^2\right)$
\EndFor
\State  $\bm{a}_t \leftarrow \arg \min_{\bm{a} \in \mathcal{I}} f_{\Tilde{\bm{\theta}}} (\bm{a})$ \label{line:exact_alg_oracle}
\State  Play arm $\bm{a}_t$, observe feedback $X_{j,t}$ for $j \in \bm{a}_t$
\State  Compute $\bm{\mu}_{t}, \bm{\varsigma}_{t}$ with feedback using Eqs. \ref{eq:posterior_variance} and \ref{eq:posterior_mean}
\EndFor
\end{algorithmic}
\end{algorithm}

\subsection{Regret analysis of Thompson Sampling for minimax paths}

We use the technique to analyze the Bayesian regret of Thompson Sampling for general bandit problems introduced by \cite{russo2014learning} and further elaborated by \cite{slivkins2019}, carefully adapting it to our problem setting. This technique was originally devised to enable convenient conversion of existing UCB regret analyses to Thompson Sampling, but can also be applied to new TS applications. Here, we do a novel extension to combinatorial bandits with minimax super-arm cost functions, which includes establishing concentration properties for the mean estimates of the \emph{non-linear} super-arm costs. In the rest of this section, we outline the most important steps of the proof of Theorem \ref{alg:exact_algorithm}, leaving technical details to the supplementary material (Appendix \ref{sec:supplementary}). In the analysis, for convenience, we assume that $T \geq d$.

\begin{theorem}
\label{thm:regret_main}
The Bayesian regret of Algorithm \ref{alg:exact_algorithm} is $\mathcal{O}(d\sqrt{T \log T})$.
\end{theorem}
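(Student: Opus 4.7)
The plan is to adapt the Russo--Van Roy Bayesian regret framework, whose central observation is that, conditional on the history $\mathcal{H}_{t-1}$, the sampled mean vector $\Tilde{\bm{\theta}}_t$ shares the exact posterior distribution of $\bm{\theta}^*$. Consequently, the TS-chosen super arm $\bm{a}_t$ and the true optimum $\bm{a}^* := \arg\min_{\bm{a}\in\mathcal{I}} f_{\bm{\theta}^*}(\bm{a})$ are conditionally identically distributed given $\mathcal{H}_{t-1}$. Introducing any $\mathcal{H}_{t-1}$-measurable lower confidence bound $L_t(\bm{a})$ on $f_{\bm{\theta}^*}(\bm{a})$, this lets me write the per-step Bayesian regret as
\begin{equation*}
\E{f_{\bm{\theta}^*}(\bm{a}_t) - f_{\bm{\theta}^*}(\bm{a}^*) \mid \mathcal{H}_{t-1}} = \E{f_{\bm{\theta}^*}(\bm{a}_t) - L_t(\bm{a}_t) \mid \mathcal{H}_{t-1}} + \E{L_t(\bm{a}^*) - f_{\bm{\theta}^*}(\bm{a}^*) \mid \mathcal{H}_{t-1}},
\end{equation*}
where the second term is non-positive on the event that $L_t$ is a valid lower bound.

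I would take
\begin{equation*}
L_t(\bm{a}) := f_{\bm{\mu}_{t-1}}(\bm{a}) - \beta_t \sum_{i \in \bm{a}} \varsigma_{i,t-1}, \qquad \beta_t = \Theta\bigl(\sqrt{\log(dT)}\bigr),
\end{equation*}
so that Gaussian tail bounds on the posterior make $|\theta^*_i - \mu_{i,t-1}| \leq \beta_t \varsigma_{i,t-1}$ hold simultaneously over all base arms on a good event $\mathcal{E}_t$ with $\Pr(\mathcal{E}_t^c)=\mathcal{O}(1/T^2)$. The critical problem-specific step is a Lipschitz property of the cost function: coupling $C_i = \theta_i + \sigma_i Z_i$ and $C'_i = \theta'_i + \sigma_i Z_i$ through a common standard normal $Z_i$ and using the $1$-Lipschitz character of $\max$ in $\ell_\infty$, I obtain
\begin{equation*}
|f_{\bm{\theta}}(\bm{a}) - f_{\bm{\theta}'}(\bm{a})| \leq \max_{i\in\bm{a}} |\theta_i - \theta'_i| \leq \sum_{i\in\bm{a}} |\theta_i - \theta'_i|.
\end{equation*}
This validates $L_t(\bm{a}) \leq f_{\bm{\theta}^*}(\bm{a})$ on $\mathcal{E}_t$ and yields $f_{\bm{\theta}^*}(\bm{a}_t) - L_t(\bm{a}_t) \leq 2\beta_t \sum_{i \in \bm{a}_t} \varsigma_{i,t-1}$.

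To finish, I would use the closed form $\varsigma_{i,t}^2 = (\varsigma_{i,0}^{-2} + N_{i,t}/\sigma_i^2)^{-1} \leq 1/N_{i,t}$ to conclude $\sum_{t:\,i\in\bm{a}_t} \varsigma_{i,t-1} = \mathcal{O}(\sqrt{N_{i,T}})$, then apply Cauchy--Schwarz:
\begin{equation*}
\sum_{i \in \mathcal{A}} \sqrt{N_{i,T}} \leq \sqrt{d \sum_i N_{i,T}} = \sqrt{d \sum_{t=1}^T |\bm{a}_t|} \leq d\sqrt{T},
\end{equation*}
since $|\bm{a}_t| \leq d$. Combining everything gives the claimed $\mathcal{O}(d\sqrt{T\log T})$ bound once the tail contribution from $\mathcal{E}_t^c$ is absorbed.

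The main obstacle I anticipate is this last step of reconciling the unbounded Gaussian feedback with the low-probability failure event: unlike bounded-reward analyses, one must show that the regret incurred on $\mathcal{E}_t^c$ is negligible despite $f_{\bm{\theta}^*}$ not being uniformly bounded across problem instances drawn from the prior. This should be handled by choosing $\beta_t$ so that $\Pr(\mathcal{E}_t^c)=\mathcal{O}(1/T^2)$ and controlling the residual using Gaussian moment bounds on $\max_i |\theta_i^*|$ and $\max_i |w_{i,t}|$. The Lipschitz contraction $|f_{\bm{\theta}} - f_{\bm{\theta}'}| \leq \max_i |\theta_i - \theta'_i|$ is where the minimax structure of the cost enters cleanly, and propagating it through the Russo--Van Roy decomposition is the technical core of the argument; routine calculations and the failure-event bookkeeping would be relegated to the supplementary material.
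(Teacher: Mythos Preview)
Your proposal follows the same Russo--Van Roy decomposition as the paper, and the key problem-specific ingredient---the $\ell_\infty$-Lipschitz property of $f_{\bm{\theta}}(\bm{a})$ in $\bm{\theta}$---is exactly the lemma the paper isolates (your coupling argument is in fact sharper, giving Lipschitz constant $1$ where the paper's split into positive and negative parts yields $2$). The principal difference is the instantiation of the confidence bounds: you center $L_t$ at the posterior mean $\bm{\mu}_{t-1}$ with radius $\beta_t\sum_{i\in\bm{a}}\varsigma_{i,t-1}$, whereas the paper centers at the empirical average $\hat{\bm{\theta}}_{t-1}$ with radius $\max_{i\in\bm{a}}\sqrt{32\log T/N_{t-1}(i)}$; both are $\mathcal{H}_{t-1}$-measurable and lead to the same $d\sqrt{T\log T}$ rate after the Cauchy--Schwarz step you describe. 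Where the routes diverge most is the obstacle you flag, namely controlling the failure-event contribution when feedback is unbounded. Your suggestion of Gaussian moment bounds on $\max_i|\theta_i^*|$ would work but injects constants depending on the prior means $\mu_{i,0}$, which may be arbitrarily large. The paper sidesteps this entirely: it never bounds the raw regret on the bad event, but instead writes $\E{[\,|\delta_{i,t-1}| - r_{i,t}\,]^+} = \prob{|\delta_{i,t-1}|>r_{i,t}}\cdot\E{|\delta_{i,t-1}|-r_{i,t}\mid |\delta_{i,t-1}|>r_{i,t}}$ and bounds the conditional expectation by $1$ via monotonicity of the truncated-Gaussian mean (since $\delta_{i,t-1}$ is zero-mean Gaussian with variance at most $1$, regardless of the prior). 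The same device applies verbatim in your posterior-centered version, because given $\mathcal{H}_{t-1}$ the standardized deviation $(\theta_i^*-\mu_{i,t-1})/\varsigma_{i,t-1}$ is standard normal; using it in place of your $\max_i|\theta_i^*|$ bound makes the final constant prior-free.
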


We initially define a sequence of upper and lower confidence bounds, for each time step $t$:

\begin{align*}
    &U_t(\bm{a}) := f_{\hat{\bm{\theta}}_{t-1}}(\bm{a})  + \max_{i \in \bm{a}} \sqrt{\frac{32 \log T}{N_{t-1}(i)}} \\
    &L_t(\bm{a}) := f_{\hat{\bm{\theta}}_{t-1}}(\bm{a})  - \max_{i \in \bm{a}} \sqrt{\frac{32 \log T}{N_{t-1}(i)}}
\end{align*}

where $\hat{\theta}_{i,t}$ is the average feedback of base arm $i \in \mathcal{A}$ until time $t$, $\hat{\bm{\theta}}_{t}$ is the average feedback vector for all arms in $\mathcal{A}$, and $N_t(i)$ is the number of times base arm $i \in \mathcal{A}$ has been played as part of a super arm until time $t$.

\begin{lemma}
\label{lem:regret_decomposition_main}
For Algorithm \ref{alg:exact_algorithm}, we have that:

\begin{align*}
    &\text{BayesRegret}(T) = \\
    &\sum_{t \in [T]} \E{U_t(\bm{a}_t) - L_t(\bm{a}_t)} + \sum_{t \in [T]} \E{f_{\bm{\theta}^*}(\bm{a}_t) - U_t (\bm{a}_t) } + \sum_{t \in [T]} \E{L_t (\bm{a}^*) - f_{\bm{\theta}^*}(\bm{a}^*) }.
\end{align*}
\end{lemma}

This Bayesian regret decomposition is a direct application of Proposition 1 of \cite{russo2014learning}. It utilizes the fact that given the history of selected arms and received feedback until time $t$, the played super arm $\bm{a}_t$ and the best possible super arm $\bm{a}^* := \arg \min_{\bm{a} \in \mathcal{I}} f_{\bm{\theta}^*}(\bm{a})$ are identically distributed under Thompson Sampling. Furthermore, also given the history, $U_t(\bm{a})$ and $L_t(\bm{a})$ are deterministic functions of the super arm $\bm{a}$. This enables the decomposition of the regret into terms of the expected confidence width, the expected overestimation of the super arm with least mean cost, and the expected underestimation of the selected super arm. By showing that $f_{\bm{\theta}^*}(\bm{a}) \in [L_t(\bm{a}), U_t(\bm{a})]$ with high probability, we can bound the last two of these terms.

\begin{lemma}
\label{lem:bad_event_regret_terms_main}
For any $t \in [T]$, we have that $\E{f_{\bm{\theta}^*}(\bm{a}_t) - U_t (\bm{a}_t) } \leq \frac{4d}{T}$ and $\E{L_t (\bm{a}^*) - f_{\bm{\theta}^*}(\bm{a}^*) } \leq \frac{4d}{T}$.
\end{lemma}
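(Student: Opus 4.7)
The plan is to argue that $U_t(\bm{a})$ and $L_t(\bm{a})$ are valid confidence bounds on $f_{\bm{\theta}^*}(\bm{a})$ uniformly in $\bm{a}$, with failure probability of order $T^{-\Omega(1)}$. Since $f_{\bm{\theta}^*}(\bm{a}_t) - U_t(\bm{a}_t)$ and $L_t(\bm{a}^*) - f_{\bm{\theta}^*}(\bm{a}^*)$ are nonpositive whenever validity holds, their expectations come entirely from the bad event and can be controlled by a Gaussian tail integral. The first step is to establish a Lipschitz-type inequality for $f$: for any super arm $\bm{a}$ and any mean vectors $\bm{\theta}, \bm{\theta}'$,
\[
  |f_{\bm{\theta}}(\bm{a}) - f_{\bm{\theta}'}(\bm{a})| \leq \max_{i \in \bm{a}} |\theta_i - \theta'_i|.
\]
This follows from a coupling argument: write $C_i = \theta_i + \epsilon_i$ and $C'_i = \theta'_i + \epsilon_i$ with shared Gaussian noise $\epsilon_i \sim \mathcal{N}(0, \sigma_i^2)$; then pointwise $|\max_i C_i - \max_i C'_i| \leq \max_i |\theta_i - \theta'_i|$, and taking expectations preserves the inequality.

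Writing $c_{i,t} := \sqrt{32 \log T / N_{t-1}(i)}$, combining the Lipschitz bound with the elementary inequalities $\max_i x_i - \max_i y_i \leq \max_i (x_i - y_i)$ and $(\max_i z_i)_+ \leq \sum_i (z_i)_+$ gives
\[
  \bigl(f_{\bm{\theta}^*}(\bm{a}_t) - U_t(\bm{a}_t)\bigr)_+ \leq \sum_{i \in \mathcal{A}} \bigl(|\theta_i^* - \hat\theta_{i,t-1}| - c_{i,t}\bigr)_+,
\]
and $L_t(\bm{a}^*) - f_{\bm{\theta}^*}(\bm{a}^*)$ admits an identical upper bound. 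This reduces the lemma to controlling per-arm Gaussian overshoots. Conditional on $\bm{\theta}^*$, the empirical mean after $n$ observations satisfies $\hat\theta_i^{(n)} - \theta_i^* \sim \mathcal{N}(0, \sigma_i^2/n)$ with $\sigma_i^2 \leq 1$, so $\prob{|\hat\theta_i^{(n)} - \theta_i^*| > x} \leq 2\exp(-nx^2/2)$. A short tail integration above $x = \sqrt{32 \log T / n}$ yields $\E{\bigl(|\hat\theta_i^{(n)} - \theta_i^*| - \sqrt{32 \log T / n}\bigr)_+} = \mathcal{O}(T^{-16})$.

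To handle the fact that $N_{t-1}(i)$ is itself random and data-dependent, I would use the crude dominating bound $(|\theta_i^* - \hat\theta_{i,t-1}| - c_{i,t})_+ \leq \sum_{n=1}^{T}(|\theta_i^* - \hat\theta_i^{(n)}| - \sqrt{32 \log T/n})_+$, which gives $\mathcal{O}(T^{-15})$ per arm, and summing over the $d$ base arms in $\mathcal{A}$ gives $\mathcal{O}(d/T^{15}) \leq 4d/T$ for any $T \geq 1$. The same bound applies to the second regret term since the final sum ranges over \emph{all} arms in $\mathcal{A}$ and is independent of which super arm is plugged in. The main technical subtlety I expect is in the first two steps: because the cost function is the expectation of a maximum of \emph{unbounded} Gaussians, the Lipschitz inequality must be proven at the sample-path level via coupling rather than by a direct maximum-of-variables argument, and the data-dependent counts $N_{t-1}(i)$ force a union bound over sample counts in place of a single concentration inequality; the generous constant $32$ inside the confidence radius appears to be chosen precisely so that the resulting $T^{-c/2}$ tails, once summed over counts and arms, still fit comfortably under $4d/T$.
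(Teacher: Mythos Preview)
Your proposal is correct and follows the same high-level skeleton as the paper (Lipschitz bound on $f$, reduction to per-arm overshoots, Gaussian concentration, union bound over sample counts), but the technical ingredients you use at two of those steps differ from the paper's in ways worth noting.

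First, your Lipschitz inequality $|f_{\bm{\theta}}(\bm{a}) - f_{\bm{\theta}'}(\bm{a})| \leq \max_{i\in\bm{a}} |\theta_i - \theta_i'|$ is proved via a one-line coupling argument (share the noise $\epsilon_i$ and use that $\max$ is $1$-Lipschitz in $\ell_\infty$). The paper's Lemma~\ref{lem:reward_function_bound_main} instead decomposes $|f_{\bm{\theta}^*} - f_{\hat{\bm{\theta}}}|$ into positive and negative parts and uses distributional equality of shifted maxima, picking up a factor of $2$ in the process. Your bound is tighter and more elementary; the paper's looser constant is why it needs the confidence radius $\sqrt{32\log T/N}$ to split as $2\sqrt{8\log T/N}$ in the subsequent steps.

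Second, for the overshoot $\E{(|\delta_{i,t-1}| - c_{i,t})_+}$ you integrate the Gaussian tail directly, $\E{(|X|-a)_+} = \int_a^\infty \Pr(|X|>x)\,dx$, and obtain $\mathcal{O}(T^{-16})$ per count. The paper instead factors this as $\Pr(|\delta|>c)\cdot\E{|\delta|-c \mid |\delta|>c}$ and bounds the conditional expectation by $1$ using a monotonicity property of truncated-Gaussian means (Lemma~\ref{lem:bad_event_expectation_main}, citing Horrace 2015). Your route avoids that external lemma entirely and yields a much smaller bound $\mathcal{O}(d/T^{15})$, though you only claim the weaker $4d/T$; the paper's route tracks constants explicitly and lands exactly on $4d/T$. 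Both handle the random $N_{t-1}(i)$ the same way in spirit, via a union/sum over possible counts $n\in[T]$.
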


Both terms are bounded in the same way, for which we need a few intermediary results. Focusing on the underestimation of the played super arm, we can see that:

\begin{align*}
    &\E{f_{\bm{\theta}^*}(\bm{a}_t) - U_t (\bm{a}_t) } \\ 
    &= \E{f_{\bm{\theta}^*}(\bm{a}_t) - f_{\hat{\bm{\theta}}_{t-1}}(\bm{a}_t) - \max_{i \in \bm{a}_t} \sqrt{\frac{32 \log T}{N_{t-1}(i)}}} .
\end{align*}

First, in Lemma \ref{lem:reward_function_bound_main}, the difference between the true mean cost $f_{\bm{\theta}^*}(\bm{a})$ of a super arm $\bm{a}$ and the corresponding estimated mean $f_{\hat{\bm{\theta}}}(\bm{a})$ is bounded. The resulting upper bound is the maximum of the differences of the true and estimated means of each individual base arm feedback, such that:

\begin{lemma}
\label{lem:reward_function_bound_main}
For any super arm $\bm{a} \in \mathcal{I}$ and time step $t \in [T]$, we have that $\vert f_{\bm{\theta}^*}(\bm{a}) - f_{\hat{\bm{\theta}}_{t-1}}(\bm{a}) \vert \leq 2 \max_{i\in\bm{a}} \vert \theta^*_i - \hat{\theta}_{i,t-1} \vert$.
\end{lemma}

This is achieved by decomposing the absolute value into a sum of the positive and negative portions of the difference, then bounding each individually. Focusing on the positive portion by assuming that $f_{\bm{\theta}^*}(\bm{a}) \geq f_{\hat{\bm{\theta}}_{t-1}}(\bm{a})$, and letting $Z_i \sim \mathcal{N}(\hat{\theta}_{i,t-1}, \sigma^2_i)$, $Y_i \sim \mathcal{N}(\theta^*_{i}, \sigma^2_i)$, $\delta_{i,t-1} :=  \theta^*_i - \hat{\theta}_{i,t-1}$ and $Q_i := Y_i - \delta_{i,t-1}$, for $i \in \bm{a}$, we can see that:

\begin{align*}
    &f_{\bm{\theta}^*}(\bm{a}) - f_{\hat{\bm{\theta}}_{t-1}}(\bm{a})\\
    &= \E{\max_{i \in \bm{a}} Y_i} - \E{\max_{i \in \bm{a}} Z_i} \\
    &= \E{\max_{i \in \bm{a}} (Q_i + \delta_{i,t-1})} - \E{\max_{i \in \bm{a}} Z_i} \\
    &\leq \E{\max_{i \in \bm{a}} Q_i} + \max_{i \in \bm{a}} \delta_{i,t-1} - \E{\max_{i \in \bm{a}} Z_i} \\
    &= \max_{i \in \bm{a}} \delta_{i,t-1} \;\;.
\end{align*}

The negative portion is bounded in the same way, directly leading to the result of Lemma \ref{lem:reward_function_bound_main}. With this result, we can proceed with Lemma \ref{lem:bad_event_regret_terms_main}, where we let $[x]^+ := \max(0, x)$:

\begin{align}
    &\E{2\max_{i \in \bm{a}_t} \vert \delta_{i,t-1} \vert - \max_{i \in \bm{a}_t} \sqrt{\frac{32 \log T}{N_{t-1}(i)}}} \nonumber\\
    &\leq \E{2\max_{i \in \bm{a}_t} \left[\vert \delta_{i,t-1} \vert - \sqrt{\frac{8 \log T}{N_{t-1}(i)}}\right]^+} \nonumber\\
    &\leq 2 \sum_{i \in \mathcal{A}} \E{ \left[\vert \delta_{i,t-1} \vert - \sqrt{\frac{8 \log T}{N_{t-1}(i)}}\right]^+} \nonumber\\
    &= 2 \sum_{i \in \mathcal{A}} \Bigg( \prob{\vert \delta_{i,t-1} \vert > \sqrt{\frac{8 \log T}{N_{t-1}(i)}}} \cdot \label{eq:bad_event_probability}\\
    &\;\;\;\;\E{ \vert \delta_{i,t-1} \vert - \sqrt{\frac{8 \log T}{N_{t-1}(i)}} \;\;\bigg\vert\;\; \vert \delta_{i,t-1} \vert > \sqrt{\frac{8 \log T}{N_{t-1}(i)}}} \Bigg) \label{eq:bad_event_expectation}
\end{align}

The probability in Eq. \ref{eq:bad_event_probability} is of the event that the difference between the estimated and true means of an arm $i$ exceeds the confidence radius $\sqrt{8 \log T / N_{t-1}(i)}$, while Eq. \ref{eq:bad_event_expectation} is the expected difference conditional on that event. We bound Eq. \ref{eq:bad_event_probability} with Lemma \ref{lem:bad_event_probability_main} and Eq. \ref{eq:bad_event_expectation} with Lemma \ref{lem:bad_event_expectation_main}.

\begin{lemma}
\label{lem:bad_event_probability_main}
$\prob{\forall t \in [T]\; \forall i \in \mathcal{A},\;\; \vert \delta_{i,t-1} \vert \leq \sqrt{\frac{8 \log T}{N_{t-1}(i)}}} \geq 1 - \frac{2}{T}$.
\end{lemma}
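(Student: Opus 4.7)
The plan is to prove this concentration result via Gaussian tail bounds combined with a union bound over base arms and all possible values of the play count. The central subtlety is that $N_{t-1}(i)$ is itself random and adapted to the algorithm's history, so the empirical mean $\hat{\theta}_{i,t-1}$ averages a random number of Gaussian samples and standard concentration cannot be applied to it directly.

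First I would decouple the random count from the concentration inequality via a coupled virtual sequence. For each base arm $i \in \mathcal{A}$ and integer $n \in \{1,\ldots,T-1\}$, let $\bar{W}_{i,n}$ denote the average of the first $n$ i.i.d.\ draws from $\mathcal{N}(\theta_i^*, \sigma_i^2)$, coupled to the samples observed by the algorithm in the order in which they are drawn. The initialization step of Algorithm \ref{alg:exact_algorithm} ensures $N_{t-1}(i) \geq 1$ for every $t \geq 2$, and under this coupling $\hat{\theta}_{i,t-1} = \bar{W}_{i,N_{t-1}(i)}$. It therefore suffices to upper bound the probability of the union event $\{\vert\bar{W}_{i,n} - \theta_i^*\vert > \sqrt{8\log T/n}\}$ taken over all $i \in \mathcal{A}$ and all $n \in \{1,\ldots,T-1\}$, which no longer has a random index inside.

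Second, since $\bar{W}_{i,n} - \theta_i^* \sim \mathcal{N}(0, \sigma_i^2 / n)$ with $\sigma_i^2 \leq 1$, the standard Gaussian tail inequality yields
\[
\prob{\vert\bar{W}_{i,n} - \theta_i^*\vert > \sqrt{\tfrac{8 \log T}{n}}} \leq 2 \exp\!\left(-\tfrac{n \cdot 8 \log T / n}{2 \sigma_i^2}\right) \leq \tfrac{2}{T^4}.
\]
A union bound over the $d$ base arms and the $T-1$ possible counts then gives a total bad-event probability of at most $2d(T-1)/T^4 \leq 2d/T^3$, which is $\leq 2/T$ under the standard assumption $d \leq T^2$ (implicit in any regime where the main theorem's bound $\mathcal{O}(d\sqrt{T\log T})$ is nontrivial). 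Taking complements yields the lemma.

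The main obstacle is the decoupling step: a naive Gaussian tail argument applied directly to $\hat{\theta}_{i,t-1}$ is invalid because $N_{t-1}(i)$ depends on past feedback. The virtual-sequence coupling combined with the simultaneous union bound over \emph{all} possible values of the count circumvents this issue at the mild cost of an extra factor of $T$ inside the logarithm, which the loose constant $8 \log T$ in the confidence radius is chosen precisely to absorb.
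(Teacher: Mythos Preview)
Your proof is correct and follows essentially the same approach as the paper: introduce a virtual sequence $\bar{W}_{i,n}$ (the paper writes $\bar{v}_{i,m}$) to decouple the random count $N_{t-1}(i)$ from the concentration step, then apply a sub-Gaussian tail bound and a union bound over arms and possible counts. Your union bound over $(i,n)$ pairs is in fact slightly tighter than the paper's, which union-bounds over $(i,t,m)$ triples and therefore invokes the assumption $T \geq d$ rather than your weaker $d \leq T^2$.
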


It is now sufficient to show that the difference $\delta_{i,t-1}$ is small for all base arms $i \in \mathcal{A}$ with high probability, which we accomplish using a standard concentration analysis through application of Hoeffding's inequality and union bounds.

\begin{lemma}
\label{lem:bad_event_expectation_main}
For any $t \in [T]$ and $i \in \mathcal{A}$, we have

\begin{align*}
    \E{ \vert \delta_{i,t-1} \vert - \sqrt{\frac{8 \log T}{N_{t-1}(i)}} \;\;\bigg\vert\;\; \vert \delta_{i,t-1} \vert > \sqrt{\frac{8 \log T}{N_{t-1}(i)}}} \leq 1 .
\end{align*}
\end{lemma}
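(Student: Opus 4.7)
The plan is to condition on $N_{t-1}(i) = n$ and exploit the Gaussian structure of $\delta_{i,t-1}$. Given that arm $i$ has been played $n$ times, the empirical mean $\hat{\theta}_{i,t-1}$ is the average of $n$ i.i.d.\ $\mathcal{N}(\theta^*_i, \sigma_i^2)$ samples, so $\delta_{i,t-1} \sim \mathcal{N}(0, s^2)$ with $s^2 := \sigma_i^2/n \leq 1/n$. Write $\tau := \sqrt{8\log T/n}$. The problem then reduces to bounding the conditional expected excess of a centered Gaussian above a threshold.

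Writing $\delta_{i,t-1} = sZ$ with $Z \sim \mathcal{N}(0,1)$, the event $\vert \delta_{i,t-1} \vert > \tau$ corresponds to $\vert Z \vert > \tau/s$. By symmetry of $Z$ we have $\E{\vert Z \vert \cond \vert Z \vert > a} = \E{Z \cond Z > a}$, and the standard Mills ratio inequality $\phi(a)/(1-\Phi(a)) \leq a + 1/a$ for $a>0$ gives $\E{Z \cond Z > a} \leq a + 1/a$. Applying this with $a = \tau/s$ and rescaling yields $\E{\vert \delta_{i,t-1} \vert \cond \vert \delta_{i,t-1} \vert > \tau} \leq \tau + s^2/\tau$, so the conditional excess above $\tau$ is at most $s^2/\tau$. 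Plugging in $s^2 \leq 1/n$ and $\tau = \sqrt{8\log T/n}$ gives a conditional-excess bound of at most $1/\sqrt{8 n \log T}$, which is well below $1$ whenever $n \geq 1$ and $T$ is not pathologically small (the case $T = 1$ is vacuous since Algorithm \ref{alg:exact_algorithm} plays every base arm in the initialization step, making $N_{t-1}(i) \geq 1$ for all subsequent rounds). Because the bound $1/\sqrt{8 n \log T} \leq 1$ holds for every realized value of $n$, it also survives taking the outer expectation over $N_{t-1}(i)$.

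The main obstacle is conceptual rather than computational: one has to check that conditioning on the adaptively determined count $N_{t-1}(i)$ does not distort the Gaussian distribution of $\hat\theta_{i,t-1}$. This is justified by the fact that each per-play observation is an independent draw from $\mathcal{N}(\theta^*_i, \sigma_i^2)$, so conditional on $N_{t-1}(i) = n$, the pooled empirical mean is $\mathcal{N}(\theta^*_i, \sigma_i^2/n)$ irrespective of which time steps the policy chose to play arm $i$. Once this is granted, all remaining steps are elementary Mills-ratio manipulations and the very loose constant $1$ on the right-hand side leaves substantial slack.
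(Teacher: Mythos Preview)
Your argument mirrors the paper's in its essentials: condition on $N_{t-1}(i)$, treat $\delta_{i,t-1}$ as a centered Gaussian with variance at most $1$, and use symmetry to reduce $\E{\vert\delta\vert - \tau \,\big\vert\, \vert\delta\vert > \tau}$ to the one-sided conditional mean. The paper then invokes monotonicity of the truncated-Gaussian mean in its pre-truncation mean (citing Horrace 2015) to obtain $\E{Z - \tau \mid Z - \tau > 0} \leq \E{Z \mid Z > 0} = \phi(0)/(1-\Phi(0)) \leq 1$, whereas you apply the Mills-ratio inequality $\phi(a)/(1-\Phi(a)) \leq a + 1/a$ directly. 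Your route yields the sharper estimate $s^2/\tau \leq 1/\sqrt{8n\log T}$, which of course still implies the lemma's constant $1$; both arguments are elementary and roughly equivalent in effort.

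One caveat: your explicit justification that ``conditional on $N_{t-1}(i)=n$, the pooled empirical mean is $\mathcal{N}(\theta_i^*, \sigma_i^2/n)$'' is not quite right as stated, since the adaptive count $N_{t-1}(i)$ is correlated with past feedback from arm $i$, and conditioning on it can in principle distort the Gaussian law of $\hat{\theta}_{i,t-1}$. The paper's proof glosses over the same point (simply asserting that $\hat{\theta}_{i,t-1}$ is Gaussian with variance at most one), so your argument is no less rigorous than the original here; but the sentence you offer does not actually resolve the issue you correctly flag.
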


Though the rewards are unbounded, this expectation can be bounded by utilizing the fact that the mean of a truncated Gaussian distribution is increasing in the mean of the distribution before truncation, by Theorem 2 of \cite{horrace2015moments}. We can see that:

\begin{align*}
    &\E{ \vert \delta_{i,t-1} \vert - \sqrt{\frac{8 \log T}{N_{t-1}(i)}} \;\;\bigg\vert\;\; \vert \delta_{i,t-1} \vert > \sqrt{\frac{8 \log T}{N_{t-1}(i)}}} \\
    &= \E{ \delta_{i,t-1} - \sqrt{\frac{8 \log T}{N_{t-1}(i)}} \;\;\bigg\vert\;\;  \delta_{i,t-1} -  \sqrt{\frac{8 \log T}{N_{t-1}(i)}} > 0} \\
    &\leq \E{ \delta_{i,t-1} \;\;\bigg\vert\;\;  \delta_{i,t-1} > 0} .
\end{align*}

We know that $\delta_{i,t-1}$ is zero-mean Gaussian with variance at most one, hence $\E{ \delta_{i,t-1} \;\;\bigg\vert\;\;  \delta_{i,t-1} > 0} \leq 1$.

With the result from Lemma $\ref{lem:bad_event_regret_terms_main}$, the last two terms of the regret decomposition in Lemma \ref{lem:regret_decomposition_main} are bounded by constants in $T$. Focusing on the remaining term, we just need to show that $\sum_{t \in [T]} \E{U_t(\bm{a}_t) - L_t(\bm{a}_t)} \leq \mathcal{O} (d \sqrt{T \log T})$ to prove Theorem \ref{thm:regret_main}:

\begin{align*}
    &\sum_{t \in [T]} \E{U_t(\bm{a}_t) - L_t(\bm{a}_t)}\\
    &= \sqrt{128 \log T} \sum_{t\in [T]} \E{\max_{i\in \bm{a}_t} \frac{1}{\sqrt{N_{t-1}(i)}}} \\
    &\leq \sqrt{128 \log T} \sum_{t\in [T]} \E{\sum_{i\in \bm{a}_t} \frac{1}{\sqrt{N_{t-1}(i)}}} \\
    &= \sqrt{128 \log T} \sum_{i\in \mathcal{A}} \E{\sum_{t:i\in \bm{a}_t} \frac{1}{\sqrt{N_{t-1}(i)}}} \\
    &\leq \sqrt{128 \log T} \sum_{i\in \mathcal{A}} \E{2 \sqrt{N_{T}(i)}}  \\
    &\leq \sqrt{128 \log T} \cdot \E{2 \sqrt{d \sum_{i\in \mathcal{A}} N_{T}(i)}}  \\
    &\leq \sqrt{128 \log T} \cdot \E{2 \sqrt{d^2 T}} \\
    &= 2 d \sqrt{128  T \log T}  \\
    &= \mathcal{O} (d \sqrt{T \log T}) .
\end{align*}

We note that the final upper bound is tight up to a polylogarithmic factor, according to existing lower bounds for combinatorial semi-bandit problems \citep{kveton2015tight}.

\subsection{Thompson Sampling with approximate objective}

Unfortunately, exact expressions for computing the expected maximum of Gaussian random variables only exist when the variables are few. In other words, we cannot compute $f_{\bm{\theta}} (\bm{a})$ exactly for a super arm $\bm{a}$ containing many base arms, necessitating some form of approximation approach. While it is possible to approximate $f_{\bm{\theta}} (\bm{a})$ through e.g., Monte Carlo simulations, we want to be able to perform the cost minimization step using a computationally efficient oracle. 

We note that, even with the capability to exactly compute $f_{\bm{\theta}} (\bm{a})$, it would not be feasible to solve the minimization problem in line \ref{line:exact_alg_oracle} of Algorithm \ref{alg:exact_algorithm}. The expected cost $f_{\bm{\theta}} (\bm{a})$ of a super arm $\bm{a}$ (i.e., the expected maximum base arm feedback) depends not only on the individual expected values of the base arm feedback distributions, but also on the shape of the joint distribution of all base arms in $\bm{a}$. Due to this fact, the stochastic version of the minimization problem lacks the property of optimal substructure (i.e., an optimal path does not necessarily consist of optimal sub-paths). For the deterministic version of the problem, as defined in Eq. \ref{eq:MM}, the presence of this property enables the usage of computationally efficient dynamic programming strategies, like Dijkstra's algorithm, which is consequently infeasible with the objective in Algorithm \ref{alg:exact_algorithm}.

Therefore, we propose the approximation method outlined in Algorithm \ref{alg:approximation_algorithm}, where the minimization step of line \ref{line:approx_alg_oracle} has been modified from Algorithm \ref{alg:exact_algorithm} with an alternative super arm cost function $\Tilde{f}_{\Tilde{\bm{\theta}}} (\bm{a}) := \max_{i \in \bm{a}} \Tilde{\theta}_i$. Switching objectives, from finding the super arm which minimizes the expected maximum base arm feedback, to instead minimize the maximum expected feedback, has the benefit of allowing us to utilize the efficient deterministic minimax path algorithms introduced earlier for both directed and undirected graphs. For directed graphs, the modified version of Dijkstra's algorithm in \cite{berman1987optimal} has a worst-case running time of $\mathcal{O}(\vert E \vert + \vert V \vert \log \vert V \vert)$ with an efficient implementation using Fibonacci heaps \cite{fredman1987}. Similarly, for undirected graphs, finding an MST (and subsequently a minimax path) can be achieved using Prim's algorithm \cite{prim1957}, with the same running time of $\mathcal{O}(\vert E \vert + \vert V \vert \log \vert V \vert)$ if Fibonacci heaps are used. The other operations performed for each $t \in [T]$ in Algorithm \ref{alg:approximation_algorithm} (i.e., the posterior samples and updates) have a combined running time of, at worst, $\mathcal{O}(\vert E \vert)$. The same oracles are also used for the baseline algorithms evaluated in Sections \ref{sec:exp:road_networks} and \ref{sec:exp:social_network}, with comparable running times.

It is possible to use alternative notions of regret to evaluate combinatorial bandit algorithms with approximate oracles \citep{chen2013combinatorial, chen2016combinatorial}. For our experimental evaluation of Algorithm \ref{alg:approximation_algorithm}, we introduce the following definition of approximate regret:

\begin{equation*}
    \text{ApproxRegret(T)} := \left(\sum_{t \in [T]} \Tilde{f}_{\bm{\theta}^*} (\bm{a}_t)\right) - T \cdot \min_{\bm{a} \in \mathcal{I}} \Tilde{f}_{\bm{\theta}^*} (\bm{a}) .
\end{equation*}

An alternative Bayesian bandit algorithm which can be used with the alternative objective is BayesUCB \citep{kaufmann2012bayesian}, which we use as a baseline for our experiments. Like Thompson Sampling, BayesUCB has been adapted to combinatorial semi-bandit settings \citep{nuara2018combinatorial, ijcai2020-0284}. Whereas Thompson Sampling in Algorithm \ref{alg:approximation_algorithm} encourages exploration by applying the oracle to parameters sampled the posterior distribution, with BayesUCB, the oracle is instead applied to optimistic estimates based on the posterior distribution. In practice, this is accomplished for our cost minimization problem by using lower quantiles of the posterior distribution of each base arm. This principle of selecting plausibly optimal arms is called \emph{optimism in the face of uncertainty} and is the underlying idea of all bandit algorithms based on UCB. 

We note that while in BayesUCB, as outlined in Algorithm 1 of \citep{kaufmann2012bayesian}, the horizon is used to calculate UCB values, the authors of that work also explain that upper quantiles of order $1 - 1/t$ (calculated without the horizon) achieve good results in practice. For that reason, we use lower quantiles of order $1/t$ in the version of BayesUCB studied in this work, making it an \emph{anytime} algorithm, like Thompson Sampling.

\begin{algorithm}[ht]
\caption{TS for minimax paths (approximation)}
\label{alg:approximation_algorithm}
\textbf{Input}: Prior parameters $\bm{\mu}_0, \bm{\varsigma}_0$
\begin{algorithmic}[1] 
\State For each base arm, play a super arm which contains it.
\For{$t \leftarrow 1, \dots, T$}
\For{$i \in \mathcal{A}$}
\State $\Tilde{\theta}_i \leftarrow $ Sample from posterior $\mathcal{N}\left(\mu_{i,t-1}, \varsigma_{i,t-1}^2\right)$
\EndFor
\State $\bm{a_t} \leftarrow \arg \min_{\bm{a} \in \mathcal{I}} \max_{i \in \bm{a}} \Tilde{\theta}_i$ \label{line:approx_alg_oracle}
\State Play arm $\bm{a}_t$, observe feedback $X_{j,t}$ for $j \in \bm{a}_t$
\State Compute $\bm{\mu}_{t}, \bm{\varsigma}_{t}$ with feedback using Eq. \ref{eq:posterior_variance} and \ref{eq:posterior_mean}
\EndFor
\end{algorithmic}
\end{algorithm}

To connect the different objectives in Algorithm \ref{alg:exact_algorithm} and Algorithm \ref{alg:approximation_algorithm}, we note that by Jensen's inequality, $\Tilde{f}_{\Tilde{\bm{\theta}}} (\bm{a}) \leq f_{\Tilde{\bm{\theta}}} (\bm{a})$ and that the approximation objective consequently will underestimate super arm costs. However, we establish an upper bound on this difference through Theorem \ref{thm:optimal_arm_difference_main}.

\begin{theorem}
\label{thm:optimal_arm_difference_main}
Given the optimal super arm $\bm{a^*}$ for Algorithm \ref{alg:exact_algorithm} and the optimal super arm $\Tilde{\bm{a}}^*$ for Algorithm \ref{alg:approximation_algorithm}, we have that $ f_{\bm{\theta}^*}(\Tilde{\bm{a}}^*) - f_{\bm{\theta}^*}(\bm{a}^*) \leq \sqrt{2 \log d}$.
\end{theorem}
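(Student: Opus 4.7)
The plan is to sandwich the expected-max objective $f_{\bm{\theta}}(\bm{a})$ between the max-expectation objective $\Tilde{f}_{\bm{\theta}}(\bm{a})$ and a shifted version of it, uniformly in $\bm{a}$, and then chain inequalities using the optimality of $\Tilde{\bm{a}}^*$ for $\Tilde{f}_{\bm{\theta}^*}$.

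First I would establish the two-sided bound
\begin{equation*}
    \Tilde{f}_{\bm{\theta}}(\bm{a}) \;\leq\; f_{\bm{\theta}}(\bm{a}) \;\leq\; \Tilde{f}_{\bm{\theta}}(\bm{a}) + \sqrt{2 \log d}
\end{equation*}
for any super arm $\bm{a} \in \mathcal{I}$ and any mean vector $\bm{\theta}$. The lower bound is just Jensen's inequality applied to the convex function $\max$: $\E{\max_{i \in \bm{a}} C_i} \geq \max_{i \in \bm{a}} \E{C_i} = \Tilde{f}_{\bm{\theta}}(\bm{a})$. For the upper bound, I would write $C_i = \theta_i + (C_i - \theta_i)$, use $\max_{i}(\theta_i + (C_i-\theta_i)) \leq \max_i \theta_i + \max_i (C_i-\theta_i)$, and then invoke the standard Gaussian maximal inequality: since each centered variable $C_i - \theta_i \sim \mathcal{N}(0, \sigma_i^2)$ is sub-Gaussian with parameter at most $1$ (because $\sigma_i^2 \leq 1$), we have $\E{\max_{i \in \bm{a}} (C_i - \theta_i)} \leq \sqrt{2 \log |\bm{a}|} \leq \sqrt{2 \log d}$.

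With this sandwich in hand, the theorem follows in three lines. Apply the upper bound at $\bm{\theta} = \bm{\theta}^*$, $\bm{a} = \Tilde{\bm{a}}^*$ to obtain $f_{\bm{\theta}^*}(\Tilde{\bm{a}}^*) \leq \Tilde{f}_{\bm{\theta}^*}(\Tilde{\bm{a}}^*) + \sqrt{2 \log d}$. Use the fact that $\Tilde{\bm{a}}^*$ minimizes $\Tilde{f}_{\bm{\theta}^*}$ over $\mathcal{I}$ to replace it with $\bm{a}^*$: $\Tilde{f}_{\bm{\theta}^*}(\Tilde{\bm{a}}^*) \leq \Tilde{f}_{\bm{\theta}^*}(\bm{a}^*)$. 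Finally apply the lower bound $\Tilde{f}_{\bm{\theta}^*}(\bm{a}^*) \leq f_{\bm{\theta}^*}(\bm{a}^*)$. Chaining yields $f_{\bm{\theta}^*}(\Tilde{\bm{a}}^*) - f_{\bm{\theta}^*}(\bm{a}^*) \leq \sqrt{2 \log d}$, as required.

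The main obstacle, if any, is the upper bound on $\E{\max_i (C_i - \theta_i)}$ in the non-identically-distributed case: the textbook bound $\sigma \sqrt{2 \log d}$ is usually stated for i.i.d.\ or equal-variance Gaussians, so I would justify its use here by noting that each $C_i - \theta_i$ is sub-Gaussian with parameter $\sigma_i \leq 1$, hence a Chernoff/union-bound argument over $\lambda > 0$ gives $\E{\max_{i \in \bm{a}}(C_i - \theta_i)} \leq \inf_{\lambda>0} \frac{1}{\lambda}\log\!\bigl(\sum_{i \in \bm{a}} e^{\lambda^2 \sigma_i^2/2}\bigr) \leq \sqrt{2 \log |\bm{a}|} \leq \sqrt{2 \log d}$. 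Everything else is a short chain of inequalities and requires no new machinery.
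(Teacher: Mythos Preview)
Your proposal is correct and follows essentially the same route as the paper: both establish the sandwich $\Tilde{f}_{\bm{\theta}}(\bm{a}) \leq f_{\bm{\theta}}(\bm{a}) \leq \Tilde{f}_{\bm{\theta}}(\bm{a}) + \sqrt{2\log d}$ via Jensen's inequality and the Gaussian maximal inequality (the paper cites this as Lemma~9 of Orabona~(2015), while you supply the standard Chernoff argument directly), and then chain through the optimality of $\Tilde{\bm{a}}^*$ for $\Tilde{f}_{\bm{\theta}^*}$. The only cosmetic difference is that the paper records the extra inequality $f_{\bm{\theta}^*}(\bm{a}^*) \leq f_{\bm{\theta}^*}(\Tilde{\bm{a}}^*)$ in its chain, which is not needed for the stated bound.
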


For any super arm $\bm{a} \in \mathcal{I}$, let $Y_i$ for $i \in \bm{a}$ be Gaussian random variables with $Y_i \sim \mathcal{N}(\theta^*_i, \sigma_i^2)$. Furthermore, let $W_i := Y_i - \theta^*_i$, such that $W_i \sim \mathcal{N}(0, \sigma^2_i)$. Then, the following holds:

\begin{align*}
    &\E{\max_{i \in \bm{a}} Y_i} \\
    &= \E{\max_{i \in \bm{a}} (W_i + \theta_i^*)} \\
    &\leq \E{\max_{i \in \bm{a}} W_i} + \max_{i \in \bm{a}} \E{Y_i} \\
    &\leq \sqrt{2 \log d} + \max_{i \in \bm{a}} \E{Y_i} \, ,
\end{align*}

where the last inequality is due to Lemma 9 in \cite{orabona2015optimal} and since $\sigma_i^2 \leq 1$ for all $i \in \bm{a}$. We also note that, by Jensen's inequality, we have $\max_{i \in \bm{a}} \E{Y_i} \leq \E{\max_{i \in \bm{a}} Y_i}$. Moreover, by definition we know that $\bm{a}^* = \arg \min_{\bm{a} \in \mathcal{I}} \E{\max_{i \in \bm{a}} Y_i}$ and $\Tilde{\bm{a}}^* = \arg \min_{\bm{a} \in \mathcal{I}} \max_{i \in \bm{a}} \E{ Y_i}$. Consequently, we have,

\begin{align*}
    \max_{i \in \Tilde{\bm{a}}^*} \E{Y_i} \leq \max_{i \in \bm{a}^*} \E{Y_i}  
    \leq \E{\max_{i \in \bm{a}^*} Y_i} \leq \E{\max_{i \in \Tilde{\bm{a}}^*} Y_i}  
    \leq \sqrt{2 \log d} + \max_{i\in \Tilde{\bm{a}}^*} \mathbb{E}\left[ Y_i \right] .
\end{align*}

Hence, we can conclude that

\begin{align*}
    f_{\bm{\theta}^*}(\Tilde{\bm{a}}^*) - f_{\bm{\theta}^*}(\bm{a}^*) = \E{\max_{i \in \Tilde{\bm{a}}^*} Y_i} - \E{\max_{i \in \bm{a}^*} Y_i}  
    \leq \sqrt{2 \log d}\, .
\end{align*}

In other words, Theorem \ref{thm:optimal_arm_difference_main} holds and the optimal solutions of the exact Algorithm \ref{alg:exact_algorithm} and the approximate Algorithm \ref{alg:approximation_algorithm} differ by at most $\sqrt{2 \log d}$. This bound is independent of the mean vector $\bm{\theta}^*$, depending only on the number of base arms and that the variance is bounded.

\section{Experimental Results}

In this section, we conduct bottleneck identification experiments using Algorithm \ref{alg:approximation_algorithm} for two real-world applications, i) road (transport) networks, and ii) collaboration (social) networks. These experiments are performed with an extended version of the simulation framework in \cite{tstutorial} and evaluated using our approximate definition of regret.
In addition, we compare Algorithm \ref{alg:exact_algorithm} to Algorithm \ref{alg:approximation_algorithm} through a toy example.

\subsection{Road networks}\label{sec:exp:road_networks}
A bottleneck in a network is a segment of a path in the network that obstructs or stops flow.
Identification of bottlenecks in a road network is a vital tool for traffic planners to analyze the network and prevent congestion.
In this application, our goal is to find the bottleneck between a source and a target, i.e., a road segment which is necessary to pass and also has minimal traffic flow. In the road network model, we let the nodes represent intersections and the directed edges represent road segments, with travel time divided by distance (seconds per meter) as edge weights.
The bottleneck between a pair of intersections is the minimum bottleneck over all paths connecting them, where the bottleneck for each of these paths is the largest weight over all road segments along it.
Note that in order for the bottleneck between a pair of intersections to have a meaning, there needs to exist at least one path connecting them.

We collect road networks of four cities, shown in Table \ref{tab:roadGraphs}, from \cite{OpenStreetMap}, where the average travel time as well as the distance is provided for each (directed) edge.
We simulate an environment with the stochastic edge weights sampled from $w_e \sim \mathcal{N}(\theta_e^*, \sigma_e^2)$, where the observation noise is $\sigma_e = 0.4$.
For the experiments, the environment samples the true unknown mean $\theta_e^*$ from the known prior $\theta_e^* \sim \mathcal{N}(\mu_{e,0}, {\varsigma_{e,0}}^2)$, where
$\varsigma_{e,0} = 0.4 \textrm{s/m}$, and $\mu_{e,0}$
is the average travel time divided by distance provided by OpenStreetMap~(OSM).

\begin{table}[t]
\begin{center}
\caption{A description of the road networks. } \label{tab:roadGraphs}
\begin{tabular}{@{}lcccc@{}}
\toprule
& \multicolumn{4}{@{}c@{}}{Road network} \\ \cmidrule{2-5}
& Eindhoven & Manhattan & Oslo & Salzburg \\ \midrule
\#Node & 7501 & 4593 & 8153 & 2921 \\
\#Edge & 10776 & 8130 & 11192 & 3848 \\
Avg. Degree & 2.873 & 3.540 & 2.745 & 2.634 \\
\bottomrule
\end{tabular}
\end{center}
\end{table}

We consider one greedy agent (GR) and two $\epsilon_t$-greedy agents (e-GR) as baselines. The greedy agent (GR) always chooses the path with the lowest current estimate of expected cost. In each time step, each e-GR agent, with probability $\epsilon_t$ decreasing with $t$ (specifically, we let $\epsilon_t = \min(1, 1/\sqrt{t})$), chooses a random path, and acts like the greedy agent otherwise. In our experiments, we implement the two e-GR agents based on the combinatorial version of $\epsilon_t$-greedy introduced in Algorithm 1 in the Supplementary Material of \cite{chen2013combinatorial}. The first e-GR agent chooses a path between the source and the target containing a uniformly chosen random node (e-GR-N), and the second e-GR agent chooses a path with a uniformly selected random edge (e-GR-E). We evaluate how the performance of the Thompson Sampling agent (TS) and the BayesUCB agent (B-UCB) compare to the baselines.
We run the simulations with all five agents for each road network and report the cumulative regret at a given horizon $T$, averaged over five repetitions. The horizon is chosen such that the instant regret is almost stabilized for the agents.

\begin{table}[b]
\begin{center}
\begin{minipage}{\textwidth}
\caption{Average cumulative regret and corresponding standard error (SE) over five runs, at the horizon $T = 6000$, for Thompson  Sampling  (TS),  BayesUCB  (B-UCB), $\epsilon_t$-greedy agents (e-GR-N and e-GR-E), and Greedy (GR) agent. } 
\label{tab:inst} 
\begin{tabular*}{\textwidth}{@{\extracolsep{\fill}}lcccc@{\extracolsep{\fill}}}
\toprule%
  & \multicolumn{4}{@{}c@{}}{Road network} \\ \cmidrule{2-5}
  & Eindhoven & Manhattan & Oslo  & Salzburg \\ \midrule
TS      & 271.8 $\pm$ 46.3  & 449.0  $\pm$ 53.0   & 226.7 $\pm$ 69.1  & 131.7  $\pm$ 13.4 \\ 
B-UCB   & 483.8 $\pm$ 112.9 & 670.5  $\pm$ 68.9   & 339.7 $\pm$ 88.3  & 259.2  $\pm$ 41.4 \\
e-GR-N  & 838.8 $\pm$ 191.2 & 1232.9 $\pm$ 72.1   & 379.3 $\pm$ 115.9 & 653.8  $\pm$ 175.7\\
e-GR-E  & 928.7 $\pm$ 191.7 & 1120.0 $\pm$ 113.8  & 405.7 $\pm$ 113.1 & 609.4  $\pm$ 87.0 \\
GR      & 936.9 $\pm$ 223.2 & 1116.7 $\pm$ 142.9  & 511.2 $\pm$ 140.1 & 1159.0 $\pm$ 155.2\\
\bottomrule
\end{tabular*}
\end{minipage}
\end{center}
\end{table}

Table \ref{tab:inst} shows the average cumulative regrets and their corresponding standard error over five runs at the horizon $T$. For all four road networks, the TS agent incurs the lowest average cumulative regret and standard error over five runs. Then, B-UCB follows TS and yields a better result than the baselines (GR and both e-GR variants).

\begin{figure*}[p] \centering
    \begin{subfigure}{0.40\textwidth}
    \centering
    \captionsetup{justification=centering}
     \includegraphics[width=\textwidth]{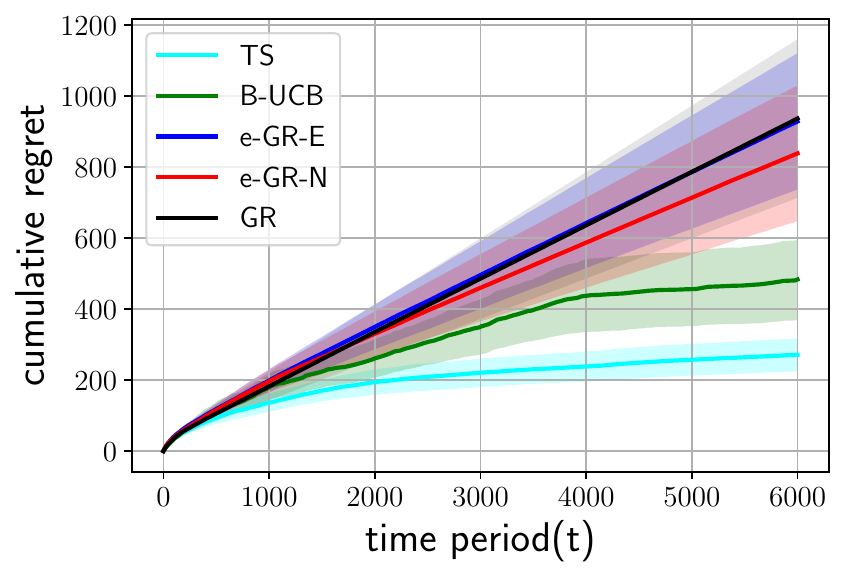}
        \caption{\footnotesize{}}
        \label{fig:EindhovenCul}
    \end{subfigure}
    \begin{subfigure}{0.30\textwidth}
    \centering
    \captionsetup{justification=centering}
        \includegraphics[width=\textwidth]{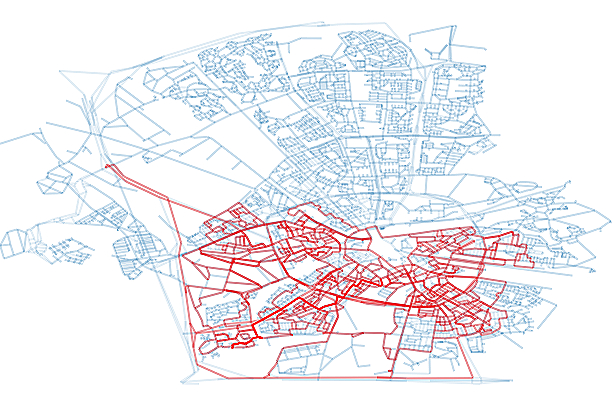}
         \caption{ \footnotesize{} }
         \label{fig:EindhovenRoad}
    \end{subfigure}
    \begin{subfigure}{0.40\textwidth}
    \centering
    \captionsetup{justification=centering}
      \includegraphics[width=\textwidth]{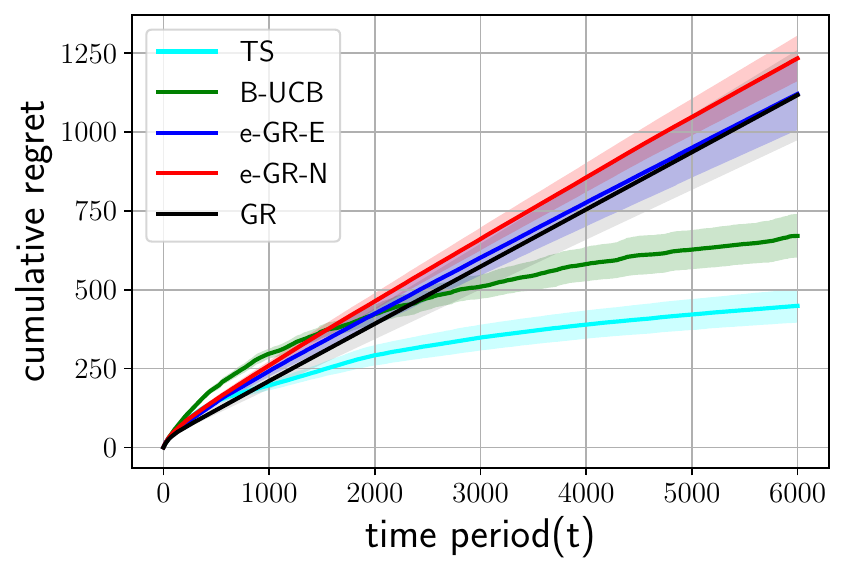}
         \caption{\footnotesize{}}
         \label{fig:ManhattanCul}
    \end{subfigure} 
    \begin{subfigure}{0.30\textwidth}
    \centering
    \captionsetup{justification=centering}
        \includegraphics[width=0.5\textwidth]{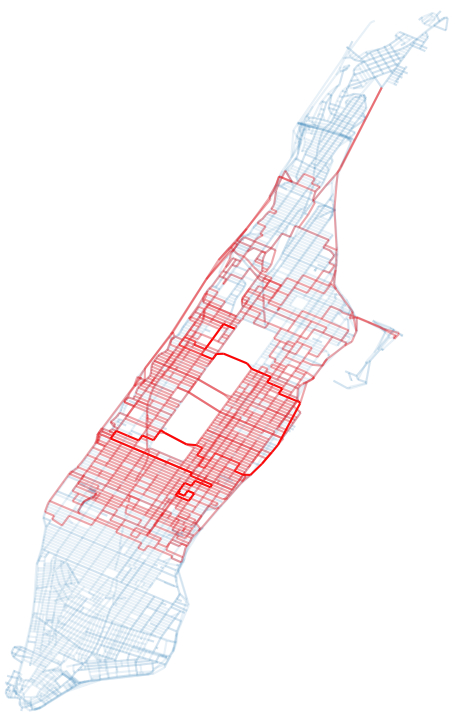}
         \caption{ \footnotesize{} }
         \label{fig:ManhattanRoad}
    \end{subfigure}
    \begin{subfigure}{0.40\textwidth}
    \centering
    \captionsetup{justification=centering}
      \includegraphics[width=\textwidth]{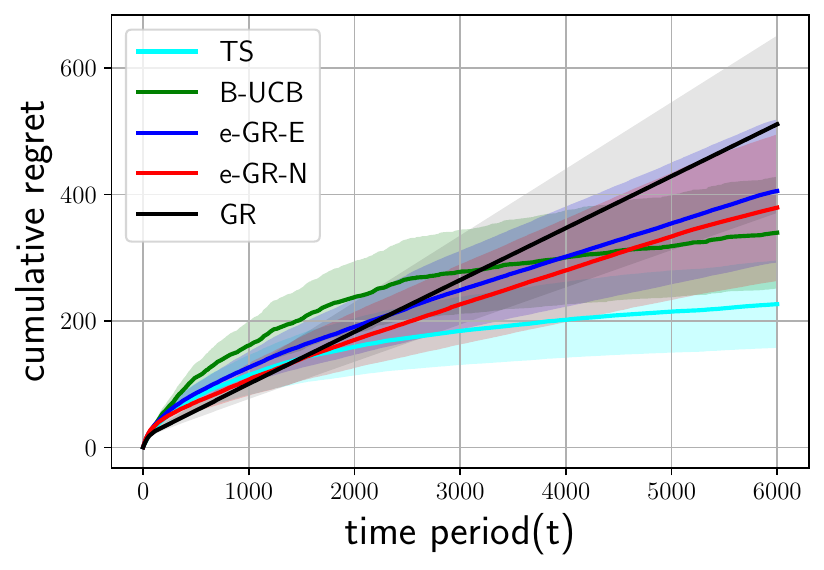}
         \caption{\footnotesize{}}
         \label{fig:OsloCul}
    \end{subfigure}
    \begin{subfigure}{0.30\textwidth}
    \centering
    \captionsetup{justification=centering}
        \includegraphics[width=\textwidth]{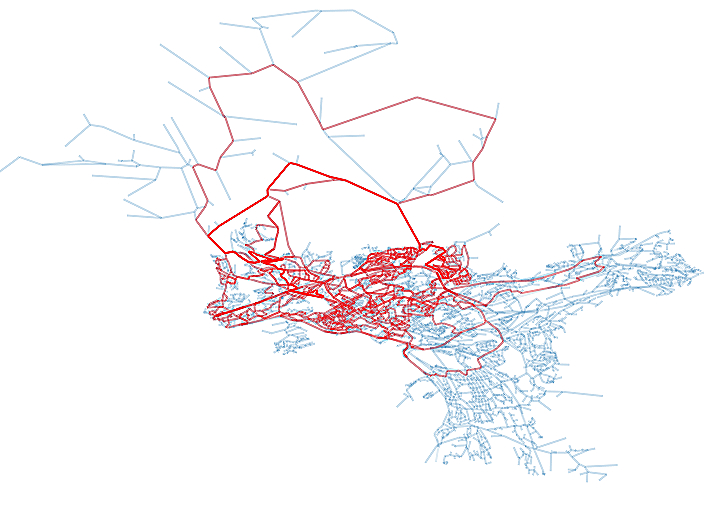}
         \caption{ \footnotesize{} }
         \label{fig:OsloRoad}
    \end{subfigure}
    \begin{subfigure}{0.40\textwidth}
    \centering
    \captionsetup{justification=centering}
      \includegraphics[width=\textwidth]{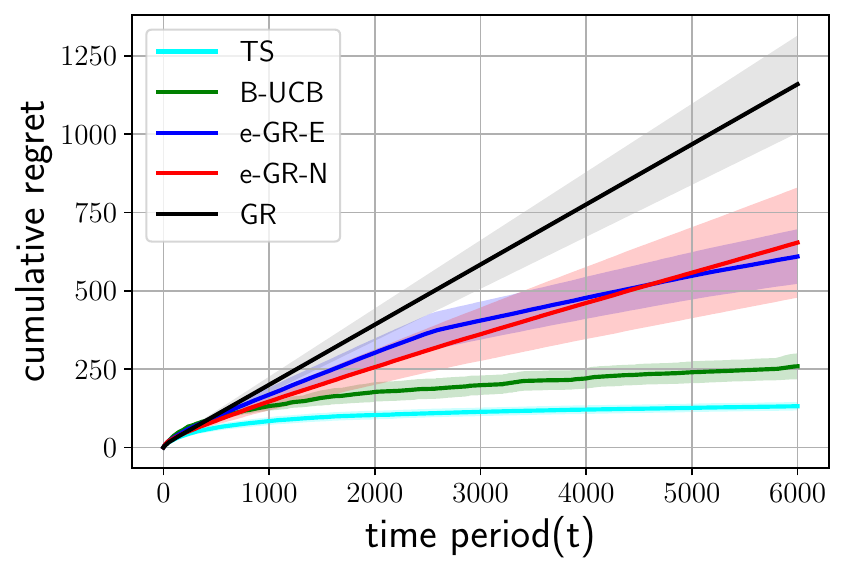}
         \caption{\footnotesize{}}
         \label{fig:SalzburgCul}
    \end{subfigure}
    \begin{subfigure}{0.30\textwidth}
    \centering
    \captionsetup{justification=centering}
        \includegraphics[width=\textwidth]{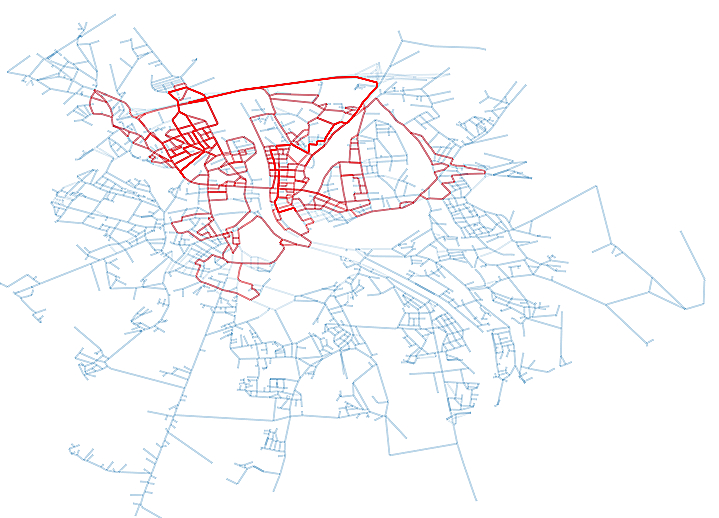}
         \caption{ \footnotesize{} }
         \label{fig:SalzburgRoad}
    \end{subfigure}    
    \caption{Cumulative regret averaged over 5 runs with shaded standard error bars, for Thompson Sampling  (TS), Bayes UCB (B-UCB), $\epsilon_t$-greedy agents (e-GR-N and e-GR-E), and greedy (GR) with horizon $T= 6000$, on  Eindhoven (a), Manhattan (c), Oslo (e) and Salzburg (g) road networks. Visualizations of the paths explored by the TS agent are shown in red, for  Eindhoven (b), Manhattan (d), Oslo (f) and Salzburg (h) road networks. Opacity illustrates the exploration of each of the road segments.
    }\label{fig:results}
\end{figure*}

Figure \ref{fig:results} illustrates the average cumulative regret with standard error (SE) bars on the road networks of the four aforementioned cities. 
For Eindhoven, Figure \ref{fig:EindhovenCul} shows the average cumulative regret, where at horizon $T=6000$ the TS agent yields the lowest cumulative regret. Then, B-UCB follows TS and achieves a better result compared to the other baselines. 
As time progresses, we can see that first TS and then B-UCB start saturating by performing sufficient exploration. With respect to the SE bars, there are differences between the five agents. The TS agent has the smallest SE bars. Figure \ref{fig:EindhovenRoad} visualizes the Eindhoven road network, where the paths explored by the TS agent are shown in red. The road segments explored (tried) more often by the TS agent are displayed more opaque. 
Figure \ref{fig:ManhattanCul}, \ref{fig:OsloCul}, and \ref{fig:SalzburgCul} show the average cumulative regret with SE bars for Manhattan, Oslo, and Salzburg, respectively. The results show that TS incurs the lowest cumulative regret and smallest SE bars. Then, B-UCB follows TS in both aspects and obtains a better result than the other baselines.

\begin{figure}[ht] \centering
  \includegraphics[width=0.4\linewidth]{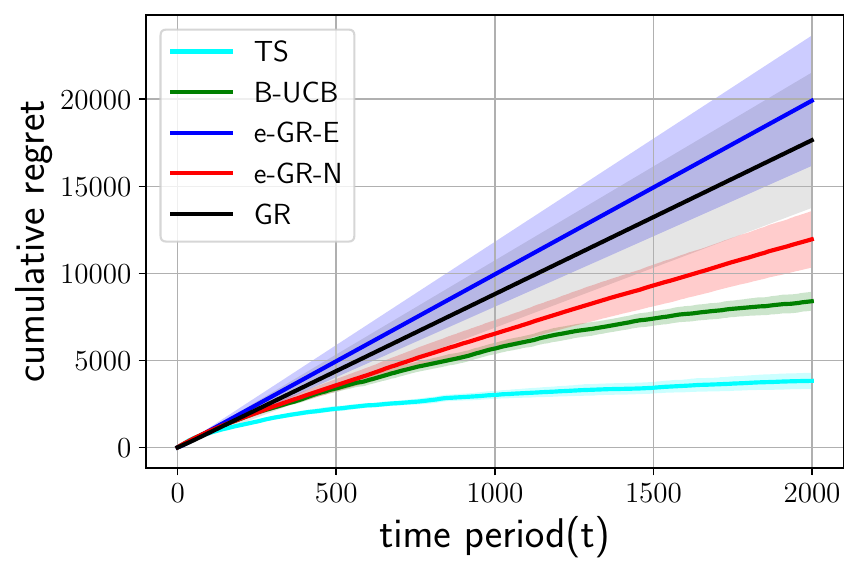}
    \caption{Cumulative regret averaged over 5 runs with shaded standard error bars, for Thompson Sampling  (TS), Bayes UCB (B-UCB), $\epsilon_t$-greedy agnets (e-GR-N and e-GR-E), and greedy (GR) with horizon $T= 2000$ for the collaboration network.  }\label{fig:geomCumulative}
\end{figure}

\subsection{Collaboration network}\label{sec:exp:social_network}

We consider a collaboration network from computational geometry (Geom) \citep{jones2002computational} as an application of our approach to social networks.
More specifically, we use the version provided by \cite{handcock2003statnet} and distributed among the Pajek datasets \citep{pajek} where certain author duplicates, occurring in minor or major name variations, have been merged. The \cite{handcock2003statnet} version is based on the BibTeX bibliography \citep{beebe2002}, to which the database from \cite{jones2002computational} has been exported.
The network has 9072 vertices representing the authors and 22577 edges with the edge weights representing the number of mutual works between a pair of authors.

We simulate an environment where each edge weight is sampled as $w_e \sim \mathcal{N}(\theta_e^*, \sigma_e^2)$, within which $\theta_e^*$ is regarded as the true (negative) mean number of shared publications between a pair of authors linked by the edge $e$, and the observation noise is $\sigma_e = 5$.
Furthermore, in this experiment, while the true negative mean number of mutual publications are assumed (by the agent) to be distributed according to the prior $\theta_e^* \sim \mathcal{N}(\mu_{e,0}, \varsigma_{e,0}^2)$ with $\varsigma_{e,0} = 10$, we instead generate the mean from a wider prior $\theta_e^* \sim \mathcal{N}(\mu_{e,0}, 20^2)$, simulating a scenario where the prior belief of the agent is too high.
The assumed mean $\mu_{e,0}$ of the prior is however consistent with the distribution from which $\theta_e^*$ is sampled, and is directly determined by the pairwise negative number of mutual collaborations from the dataset in \cite{handcock2003statnet}.

Figure \ref{fig:geomCumulative} shows the cumulative regret, averaged over five runs for the different agents with horizon $T=2000$, again chosen such that the regret is stabilized for all agents. One can see that the TS agent reaches the lowest cumulative regret, similar to the experimental studies on road networks. 

\subsection{Exact objective toy example}

\begin{figure}[t!]
\centering
\includegraphics[width=0.4\textwidth]{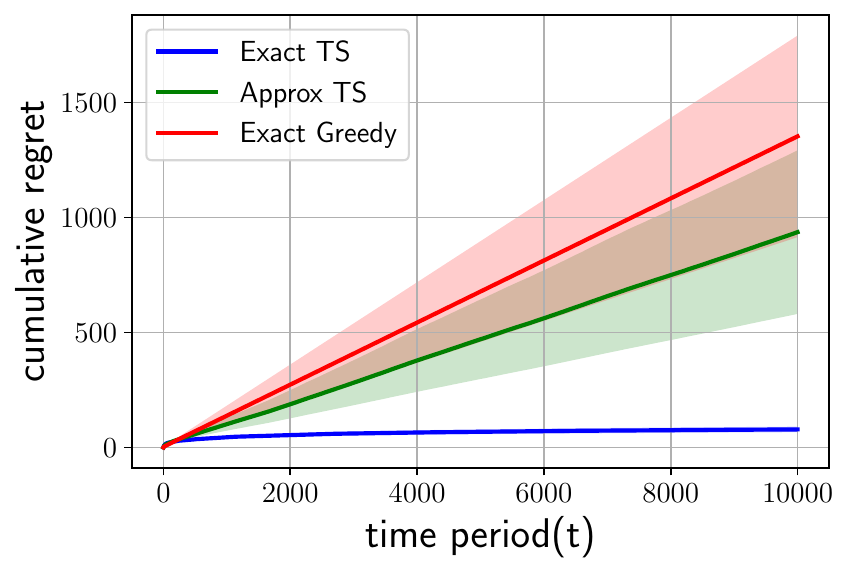}
\caption{Experimental results with average cumulative regret on the toy example, with $T=10000$, on exact TS, approximate TS and exact greedy.}
\label{fig:toy_example}
\end{figure}

While it is not feasible to evaluate Algorithm \ref{alg:exact_algorithm} on graphs representing real-life transportation or social networks, it is possible for small synthetic graphs. We construct a graph consisting of 6 nodes and 10 edges, with the source and target nodes connected by four paths of length 2 and four paths of length 3. For each edge $e$, we use the sample the mean from a standard Gaussian prior, such that $\theta_e^* \sim \mathcal{N}(0,1)$. The stochastic weights are then generated in each time step $t$ such that $w_{e,t} \sim \mathcal{N}(\theta^*_e, 1)$.

In order to calculate the expected cost of each path, we use existing exact expressions for the expected maximum of two \cite{clark1961greatest} and three \cite{lo2020improving, guptagaussians} independent Gaussian random variables. Instead of using an oracle, we simply enumerate the paths to find the one with minimum expected cost. 

In Figure \ref{fig:toy_example}, we compare Algorithm \ref{alg:exact_algorithm} (TS with exact objective) and Algorithm \ref{alg:approximation_algorithm} (TS with approximate objective) using the exact notion of (cumulative) regret as defined in Eq. \ref{eq:exact_regret}. Furthermore, we include a greedy baseline which also uses the exact objective. We use a horizon of $T=10000$ and average the results over 20 experiments, wherein each algorithm is applied to a problem instance sampled from the prior.

We can see that the regret of exact TS quickly saturates, while approximate TS and the greedy method tend to end up in sub-optimal solutions. For approximate TS, this is to be expected since optimal arms for the exact and approximate problems may be different. It is worth noting, however, that approximate TS performs better than the exact greedy method on average.

\section{Conclusion}
We developed an online learning framework for bottleneck identification in networks via minimax paths.
In particular, we modeled this task as a combinatorial semi-bandit problem for which we proposed a combinatorial version of Thompson Sampling. We then established an upper bound on the  Bayesian regret of the Thompson Sampling method.
To deal with the computational intractability of the problem, we devised an alternative problem formulation which approximates the original objective.
Finally, we investigated the framework on several directed and undirected real-world networks from  transport and collaboration domains. Our experimental results demonstrate its effectiveness  compared to alternatives such as greedy and B-UCB methods. 

\section*{Author Contributions}
All authors have contributed significantly to the conception, design and writing of this work. The theoretical analysis was primarily performed by Niklas {\AA}kerblom, and the experimental study was primarily performed by Fazeleh Sadat Hoseini.

\section*{Acknowledgments}
This work is partially funded by the Strategic Vehicle Research and Innovation Programme (FFI) of Sweden, through the project EENE (reference number: 2018-01937). We want to thank the reviewers for their helpful comments and suggestions. We also want to thank Emilio Jorge, Emil Carlsson and Tobias Johansson for insightful discussions around the proofs.

\appendix
\section{Technical Details of Regret Analysis}\label{sec:supplementary}

Here, we include detailed proofs for the theorems and lemmas in the main paper. We use the technique to analyze the Bayesian regret of Thompson Sampling for general bandit problems outlined by \cite{russo2014learning} and further detailed by \cite{slivkins2019}, carefully adapting it to our problem setting.

\begin{theorem}
The Bayesian regret of Algorithm \ref{alg:exact_algorithm} is $\mathcal{O}(d\sqrt{T \log T})$.
\end{theorem}

\begin{proof}
By Lemma \ref{lem:regret_decomposition} combined with Lemma \ref{lem:over_under_estimation}, we have

\begin{align*}
    \text{BayesRegret}(T) \leq 8d + \sum_{t \in [T]} \E{U_t(\bm{a}_t) - L_t(\bm{a}_t)} =& \\
     8 d + 2 \sum_{t \in [T]} \E{\max_{i\in \bm{a}_t} \sqrt{\frac{32 \log T}{N_{t-1}(i)}}} =& \\
     8 d + \sqrt{128 \log T} \sum_{t \in [T]} \E{\max_{i\in \bm{a}_t} \frac{1}{\sqrt{N_{t-1}(i)}}} \leq& \\
     8 d + \sqrt{128 \log T} \sum_{t \in [T]} \E{\sum_{i\in \bm{a}_t} \frac{1}{\sqrt{N_{t-1}(i)}}} =& \\
     8 d + \sqrt{128 \log T} \sum_{i\in \mathcal{A}} \E{\sum_{t:i\in \bm{a}_t} \frac{1}{\sqrt{N_{t-1}(i)}}} =& \\
     8 d + \sqrt{128 \log T} \sum_{i\in \mathcal{A}} \E{\sum_{j=1}^{N_{T}(i)} \frac{1}{\sqrt{j}}} \leq& \\
     \intertext{(See proof of Lemma 1 in \cite{russo2014learning})}
     8 d + 2 \sqrt{128 \log T} \sum_{i\in \mathcal{A}} \E{\sqrt{N_{T}(i)}} \leq& \\
     \intertext{(Cauchy-Schwarz inequality)}
     8 d + 2 \sqrt{128 \log T} \cdot \E{\sqrt{d \sum_{i\in \mathcal{A}} N_{T}(i)}} \leq& \\
     8 d + 2 \sqrt{128 \log T} \cdot \E{\sqrt{d^2 T}} =& \\
     8 d + 2 d \sqrt{128  T \log T} =& \\
     \mathcal{O}(d\sqrt{T \log T}) &
\end{align*}
\end{proof}

\begin{lemma}
\label{lem:regret_decomposition}
For Algorithm \ref{alg:exact_algorithm}, we have that $\text{BayesRegret}(T) = \sum_{t \in [T]} \E{U_t(\bm{a}_t) - L_t(\bm{a}_t)} + \\ \sum_{t \in [T]} \E{f_{\bm{\theta}^*}(\bm{a}_t) - U_t (\bm{a}_t) } + \sum_{t \in [T]} \E{L_t (\bm{a}^*) - f_{\bm{\theta}^*}(\bm{a}^*) }$.
\end{lemma}

\begin{proof}
By Proposition 1 in \cite{russo2014learning}, we can decompose the Bayesian regret of the algorithm in the following way:

\begin{align*}
    \text{BayesRegret}(T) = \sum_{t \in [T]} \E{f_{\bm{\theta}^*}(\bm{a}_t) - L_t (\bm{a}_t) } + \sum_{t \in [T]} \E{L_t (\bm{a}^*) - f_{\bm{\theta}^*}(\bm{a}^*) } =& \\
    \sum_{t \in [T]} \E{U_t(\bm{a_t}) - L_t(\bm{a_t})} + \sum_{t \in [T]} \E{f_{\bm{\theta}^*}(\bm{a}_t) - U_t (\bm{a}_t) } + \sum_{t \in [T]} \E{L_t (\bm{a}^*) - f_{\bm{\theta}^*}(\bm{a}^*) } &
\end{align*}
\end{proof}

\begin{lemma}
\label{lem:over_under_estimation}
For any $t \in [T]$, we have that $\E{f_{\bm{\theta}^*}(\bm{a}_t) - U_t (\bm{a}_t) } \leq \frac{4d}{T}$ and $\E{L_t (\bm{a}^*) - f_{\bm{\theta}^*}(\bm{a}^*) } \leq \frac{4d}{T}$.
\end{lemma}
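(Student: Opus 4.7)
My plan is to prove the two inequalities in parallel, since they are symmetric: both reduce to bounding a quantity of the form $\mathbb{E}[\,2\max_{i\in\bm{a}}|\delta_{i,t-1}| - \max_{i\in\bm{a}}\sqrt{32\log T/N_{t-1}(i)}\,]$ for an appropriate super arm $\bm{a}$ (either $\bm{a}_t$ or $\bm{a}^*$). Starting from the definitions of $U_t$ and $L_t$, I would substitute and apply Lemma \ref{lem:reward_function_bound_main} to replace $|f_{\bm{\theta}^*}(\bm{a}) - f_{\hat{\bm{\theta}}_{t-1}}(\bm{a})|$ with $2\max_{i\in\bm{a}}|\delta_{i,t-1}|$, where $\delta_{i,t-1} := \theta^*_i - \hat{\theta}_{i,t-1}$. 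This converts both expressions into the same template.

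Next, I would use the elementary inequality $\max_i X_i - \max_i Y_i \leq \max_i [X_i - Y_i]^+$ to move the two maxima inside a single max, yielding the upper bound $\mathbb{E}\bigl[2\max_{i\in\bm{a}}\bigl[|\delta_{i,t-1}| - \sqrt{8\log T/N_{t-1}(i)}\bigr]^+\bigr]$ (the factor $2$ outside splits the radicand from $32$ to $8$). Then, to remove the dependence on the random super arm $\bm{a}$, I would relax the max over $i\in\bm{a}$ to a sum over all $i\in\mathcal{A}$, since each summand is nonnegative. This gives
\begin{equation*}
2\sum_{i\in\mathcal{A}} \mathbb{E}\!\left[\left[|\delta_{i,t-1}| - \sqrt{\tfrac{8\log T}{N_{t-1}(i)}}\right]^+\right].
\end{equation*}

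I would then rewrite each expectation as $\Pr\{|\delta_{i,t-1}| > \sqrt{8\log T/N_{t-1}(i)}\}$ times the conditional expectation given the same event. Lemma \ref{lem:bad_event_probability_main} says that the union (over $t$ and $i$) of these "bad" events has probability at most $2/T$, so in particular each individual probability is at most $2/T$. Lemma \ref{lem:bad_event_expectation_main} bounds each conditional expectation by $1$. Combining these, each summand is at most $2/T$, giving a total of $2\cdot d\cdot (2/T)\cdot 1 = 4d/T$, as claimed.

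The main obstacle in this plan is the step that enlarges $\max_{i\in\bm{a}}$ to $\sum_{i\in\mathcal{A}}$: one must be careful that the positive-part trick is legitimate (so the inequality survives the sign flip from expectation to probability$\times$conditional expectation), and one must justify using the per-arm probability bound instead of trying to apply the joint bound of Lemma \ref{lem:bad_event_probability_main} directly — the latter does not immediately suffice because the conditional expectation in Lemma \ref{lem:bad_event_expectation_main} can still contribute beyond a high-probability event on the average. Once these details are in place, the remainder of the argument is a routine assembly of the two supporting lemmas.
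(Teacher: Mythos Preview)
Your proposal is correct and follows essentially the same approach as the paper: apply Lemma~\ref{lem:reward_function_bound_main}, collapse the two maxima into a single $\max_i[\,\cdot\,]^+$ (the paper does this by naming $j=\arg\max_i|\delta_{i,t-1}|$ explicitly, which is the same inequality you state abstractly), relax to a sum over $\mathcal{A}$, and then multiply the per-arm bad-event probability from Lemma~\ref{lem:bad_event_probability_main} by the conditional-expectation bound from Lemma~\ref{lem:bad_event_expectation_main}. The concerns you flag in your last paragraph are handled exactly as you describe, so there is no remaining obstacle.
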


\begin{proof}
\begin{align*}
    \E{f_{\bm{\theta}^*}(\bm{a}_t) - U_t (\bm{a}_t) } =& \\
    \E{f_{\bm{\theta}^*}(\bm{a}_t) - f_{\hat{\bm{\theta}}_{t-1}}(\bm{a}_t) - \max_{i \in \bm{a}_t} \sqrt{\frac{32 \log T}{N_{t-1}(i)}}} \leq& \\
    \intertext{(By Lemma \ref{lem:reward_function_bound})}
    \E{2\max_{i \in \bm{a}_t} \vert \theta^*_i - \hat{\theta}_{i,t-1} \vert - \max_{i \in \bm{a}_t} \sqrt{\frac{32 \log T}{N_{t-1}(i)}}} =& \\
    \intertext{(Let $j = \arg \max_{i \in \bm{a}_t} \vert \theta^*_i - \hat{\theta}_{i,t-1} \vert $)}
    \E{2 \vert \theta^*_j - \hat{\theta}_{j,t-1} \vert - \max_{i \in \bm{a}_t} \sqrt{\frac{32 \log T}{N_{t-1}(i)}}} \leq& \\
    \E{2 \vert \theta^*_j - \hat{\theta}_{j,t-1} \vert - \sqrt{\frac{32 \log T}{N_{t-1}(j)}}} \leq& \\
    \E{2 \left[\vert \theta^*_j - \hat{\theta}_{j,t-1} \vert - \sqrt{\frac{8 \log T}{N_{t-1}(j)}}\right]^+} \leq& \\
    \E{2 \sum_{i \in \mathcal{A}} \left[\vert \theta^*_i - \hat{\theta}_{i,t-1} \vert - \sqrt{\frac{8 \log T}{N_{t-1}(i)}}\right]^+} =& \\
    2 \sum_{i \in \mathcal{A}} \E{ \left[\vert \theta^*_i - \hat{\theta}_{i,t-1} \vert - \sqrt{\frac{8 \log T}{N_{t-1}(i)}}\right]^+} =& \\
    2 \sum_{i \in \mathcal{A}} \E{ \vert \theta^*_i - \hat{\theta}_{i,t-1} \vert - \sqrt{\frac{8 \log T}{N_{t-1}(i)}} \;\;\;\bigg\vert\;\;\; \vert \theta^*_i - \hat{\theta}_{i,t-1} \vert > \sqrt{\frac{8 \log T}{N_{t-1}(i)}}} \prob{\vert \theta^*_i - \hat{\theta}_{i,t-1} \vert > \sqrt{\frac{8 \log T}{N_{t-1}(i)}}} \leq& \\
    \intertext{(By Lemma \ref{lem:bad_event_expectation})}
    2 \sum_{i \in \mathcal{A}} \prob{\vert \theta^*_i - \hat{\theta}_{i,t-1} \vert > \sqrt{\frac{8 \log T}{N_{t-1}(i)}}} \leq& \\
    \intertext{(By Lemma \ref{lem:concentration})}
    2 \sum_{i \in \mathcal{A}} \frac{2}{T} \leq& \\
    \frac{4d}{T} &
\end{align*}

The proof for $\E{L_t (\bm{a}^*) - f_{\bm{\theta}^*}(\bm{a}^*) } \leq \frac{4d}{T}$ is done in the same way.
\end{proof}

\begin{lemma}
\label{lem:reward_function_bound}
For any super arm $\bm{a} \in \mathcal{I}$ and time step $t \in [T]$, we have that $\vert f_{\bm{\theta}^*}(\bm{a}) - f_{\hat{\bm{\theta}}_{t-1}}(\bm{a}) \vert \leq 2 \max_{i\in\bm{a}} \vert \theta^*_i - \hat{\theta}_{i,t-1} \vert$.
\end{lemma}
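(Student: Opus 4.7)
The plan is to reduce the bound to a simple deterministic-shift inequality by coupling the two Gaussian families used to define $f_{\bm{\theta}^*}(\bm{a})$ and $f_{\hat{\bm{\theta}}_{t-1}}(\bm{a})$. Concretely, on a common probability space I would introduce mutually independent random variables $Y_i \sim \mathcal{N}(\theta^*_i, \sigma_i^2)$ for $i \in \bm{a}$ and then define $Z_i := Y_i + (\hat{\theta}_{i,t-1} - \theta^*_i)$. Since the shifts are deterministic and applied coordinate-wise, the $Z_i$'s remain mutually independent with $Z_i \sim \mathcal{N}(\hat{\theta}_{i,t-1}, \sigma_i^2)$, so by the definition of $f$ we have $f_{\bm{\theta}^*}(\bm{a}) = \E{\max_{i\in\bm{a}} Y_i}$ and $f_{\hat{\bm{\theta}}_{t-1}}(\bm{a}) = \E{\max_{i\in\bm{a}} Z_i}$.

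Following the sketch in the main text, I would next decompose the absolute value as $|f_{\bm{\theta}^*}(\bm{a}) - f_{\hat{\bm{\theta}}_{t-1}}(\bm{a})| = [f_{\hat{\bm{\theta}}_{t-1}}(\bm{a}) - f_{\bm{\theta}^*}(\bm{a})]^+ + [f_{\bm{\theta}^*}(\bm{a}) - f_{\hat{\bm{\theta}}_{t-1}}(\bm{a})]^+$ and bound each nonnegative portion by $\max_{i\in\bm{a}} |\theta^*_i - \hat{\theta}_{i,t-1}|$ individually. For the first portion, I apply the elementary pointwise inequality $\max_i(a_i + c_i) \leq \max_i a_i + \max_i c_i$ with $a_i = Y_i$ and deterministic constants $c_i = \hat{\theta}_{i,t-1} - \theta^*_i$, obtaining $\max_{i\in\bm{a}} Z_i \leq \max_{i\in\bm{a}} Y_i + \max_{i\in\bm{a}}(\hat{\theta}_{i,t-1} - \theta^*_i)$ pointwise. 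Taking expectations and bounding the deterministic shift by its absolute-value version gives $f_{\hat{\bm{\theta}}_{t-1}}(\bm{a}) - f_{\bm{\theta}^*}(\bm{a}) \leq \max_{i\in\bm{a}} |\theta^*_i - \hat{\theta}_{i,t-1}|$. The second portion is handled by the symmetric coupling $Y_i = Z_i + (\theta^*_i - \hat{\theta}_{i,t-1})$ and the same pointwise inequality.

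Summing the two bounds yields $|f_{\bm{\theta}^*}(\bm{a}) - f_{\hat{\bm{\theta}}_{t-1}}(\bm{a})| \leq 2 \max_{i\in\bm{a}} |\theta^*_i - \hat{\theta}_{i,t-1}|$, which is exactly the stated inequality. The only conceptually nontrivial step is recognizing that a deterministic coupling between the two Gaussian families reduces the problem to the pointwise shift inequality $\max_i(a_i + c_i) \leq \max_i a_i + \max_i c_i$; once this is in place the rest is routine. The main point to be careful about is that independence within each family must be preserved (which it is, since the shifts are deterministic), so that the coupled variables genuinely realize the marginals defining $f_{\bm{\theta}^*}$ and $f_{\hat{\bm{\theta}}_{t-1}}$. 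I do not anticipate any further technical obstacle.
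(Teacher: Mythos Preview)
Your proposal is correct and follows essentially the same approach as the paper's proof: both decompose the absolute value into its positive and negative parts and bound each via the elementary shift inequality $\max_i(a_i+c_i)\le \max_i a_i+\max_i c_i$ applied to the Gaussians with deterministic offsets $c_i=\hat{\theta}_{i,t-1}-\theta^*_i$. The only cosmetic difference is that you couple $Y_i$ and $Z_i$ directly on one probability space, whereas the paper keeps $Y_i,Z_i$ separate and introduces auxiliary shifted copies $O_i,Q_i$ that match the other family in distribution; the underlying argument is identical.
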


\begin{proof}

Let $Y_i$ and $Z_i$ for $i \in \bm{a}$ be Gaussian random variables with $Y_i \sim \mathcal{N}(\theta_i^*, \sigma^2_i)$ and $Z_i \sim \mathcal{N}(\hat{\theta}_{i,t-1}, \sigma^2_i)$.

\begin{align*}
\vert f_{\bm{\theta}^*}(\bm{a}) - f_{\hat{\bm{\theta}}_{t-1}}(\bm{a}) \vert =& \\
\left\vert \E{\max_{i \in \bm{a}} Y_i} - \E{\max_{i \in \bm{a}} Z_i} \right\vert =& \\
\intertext{(Define $[x]^+ := \max(0, x)$)}
\left[ \E{\max_{i \in \bm{a}} Y_i} - \E{\max_{i \in \bm{a}} Z_i} \right]^+ + \left[\E{\max_{i \in \bm{a}} Z_i} - \E{\max_{i \in \bm{a}} Y_i} \right]^+ =&
\intertext{(Let $\delta_{i,t-1} := \theta^*_i - \hat{\theta}_{i,t-1}$, $Q_i := Y_i - \delta_{i,t-1}$, $O_i := Z_i + \delta_{i,t-1}$)}
\left[ \E{\max_{i \in \bm{a}} (Q_i + \delta_{i,t-1})} - \E{\max_{i \in \bm{a}} Z_i} \right]^+ + \left[\E{\max_{i \in \bm{a}} (O_i - \delta_{i,t-1})} - \E{\max_{i \in \bm{a}} Y_i} \right]^+ \leq& \\
\left[ \E{\max_{i \in \bm{a}} Q_i + \max_{i \in \bm{a}} \delta_{i,t-1}} - \E{\max_{i \in \bm{a}} Z_i} \right]^+ + \left[\E{\max_{i \in \bm{a}} O_i + \max_{i \in \bm{a}} (-\delta_{i,t-1})} - \E{\max_{i \in \bm{a}} Y_i} \right]^+ =& \\
\left[ \E{\max_{i \in \bm{a}} Q_i} + \max_{i \in \bm{a}} \delta_{i,t-1} - \E{\max_{i \in \bm{a}} Z_i} \right]^+ + \left[\E{\max_{i \in \bm{a}} O_i} + \max_{i \in \bm{a}} (-\delta_{i,t-1}) - \E{\max_{i \in \bm{a}} Y_i} \right]^+ =& \\
\intertext{(Since $\E{\max_{i \in \bm{a}} Q_i} = \E{\max_{i \in \bm{a}} Z_i}$ and $\E{\max_{i \in \bm{a}} O_i} = \E{\max_{i \in \bm{a}} Y_i}$)}
\left[\max_{i \in \bm{a}} \delta_{i,t-1}\right]^+ + \left[\max_{i \in \bm{a}} (-\delta_{i,t-1})\right]^+ \leq& \\
2 \max_{i \in \bm{a}} \vert \delta_{i,t-1} \vert =& \\
2 \max_{i \in \bm{a}} \vert \theta^*_i - \hat{\theta}_{i,t-1} \vert &
\end{align*}
\end{proof}

\begin{lemma}
\label{lem:concentration}
$\prob{\forall t \in [T]\; \forall i \in \mathcal{A},\;\; \vert \theta^*_i - \hat{\theta}_{i,t-1} \vert \leq \sqrt{\frac{8 \log T}{N_{t-1}(i)}}} \geq 1 - \frac{2}{T}$.
\end{lemma}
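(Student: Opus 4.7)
The plan is to apply a standard Gaussian tail bound to the deviations of each arm's empirical mean, combined with a union bound, with the one subtlety being that the denominator $N_{t-1}(i)$ inside the square root is itself a random quantity determined by the algorithm's adaptive choices. To decouple this dependence, I will use a peeling argument over all possible values of $N_{t-1}(i)$.

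First, for each base arm $i \in \mathcal{A}$ and each integer $n \ge 1$, let $\bar{X}_{i,n}$ denote the empirical mean of the first $n$ feedback samples received from arm $i$. Because these samples are i.i.d.\ $\mathcal{N}(\theta^*_i,\sigma_i^2)$ with $\sigma_i^2 \le 1$, we have $\bar{X}_{i,n} - \theta^*_i \sim \mathcal{N}(0,\sigma_i^2/n)$, hence sub-Gaussian with variance proxy at most $1/n$. The standard Gaussian tail bound then yields
\begin{align*}
\prob{|\bar{X}_{i,n} - \theta^*_i| > \sqrt{\tfrac{8 \log T}{n}}} \le 2 \exp(-4 \log T) = \frac{2}{T^{4}}.
\end{align*}

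Next, I will take a union bound over all $i \in \mathcal{A}$ (of which there are $d$) and all $n \in \{1,\ldots,T\}$, showing that the ``peeled'' bad event $\{\exists i\in\mathcal{A},\,\exists n\in[T]:|\bar{X}_{i,n} - \theta^*_i| > \sqrt{8 \log T/n}\}$ has probability at most $2d/T^{3}$. Under the mild condition $d \le T^{2}$ (comfortably satisfied in the regime where the regret bound $\mathcal{O}(d\sqrt{T\log T})$ is meaningful), this is at most $2/T$.

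Finally, I will reduce the bad event in the lemma to the peeled bad event. The initialization step (line 1 of Algorithm 1) guarantees $N_{t-1}(i) \ge 1$ for every $t$ and every $i$, while clearly $N_{t-1}(i) \le T$. By construction, $\hat{\theta}_{i,t-1}$ is the empirical mean of all observations of arm $i$ collected up through time $t-1$, so $\hat{\theta}_{i,t-1} = \bar{X}_{i,N_{t-1}(i)}$ for some $N_{t-1}(i) \in \{1,\ldots,T\}$. Therefore the event $\{|\hat{\theta}_{i,t-1} - \theta^*_i| > \sqrt{8 \log T/N_{t-1}(i)}\}$ is contained in the peeled bad event, yielding the claimed probability bound. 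The main obstacle is precisely the random index $N_{t-1}(i)$: Hoeffding applied with a random sample size is not directly valid, and the peeling / union-over-$n$ step (paying a single extra factor of $T$) is the standard remedy and the only real technical manoeuvre in the proof.
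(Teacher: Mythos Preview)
Your proof is correct and follows essentially the same route as the paper: define the empirical mean after a fixed number $n$ of pulls, apply the Gaussian/Hoeffding tail bound at level $2/T^4$, and union-bound over arms and over the possible values of the sample count to handle the random index $N_{t-1}(i)$.

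One small difference worth noting: the paper takes the union bound over all triples $(i,t,m)$ with $m\le t-1$, which costs an extra factor of $T$ and yields $2d/T^2$, requiring the assumption $T\ge d$. You observe that the bad event depends only on the pair $(i,N_{t-1}(i))$, so a union over $(i,n)$ with $n\in[T]$ suffices, giving the tighter $2d/T^3$ and the milder condition $d\le T^2$. This is a cleaner way to do the same peeling and is a (harmless) improvement over the paper's argument.
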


\begin{proof}

We define $\Bar{v}_{i,m}$ as the average feedback of base arm $i$ for the first $m$ times it has been played as part of a super arm, i.e., such that $\hat{\theta}_{i,t} = \Bar{v}_{i,N_{t}(i)}$. Then:

\begin{align*}
    \prob{\forall t \in [T]\; \forall i \in \mathcal{A},\;\; \vert \theta^*_i - \hat{\theta}_{i,t-1} \vert \leq \sqrt{\frac{8 \log T}{N_{t-1}(i)}}} =& \\
    1 - \prob{\exists t \in [T]\; \exists i \in \mathcal{A},\;\; \vert \theta^*_i - \hat{\theta}_{i,t-1} \vert > \sqrt{\frac{8 \log T}{N_{t-1}(i)}}} \geq& \\
    \intertext{(Union bound)}
    1 - \sum_{i \in \mathcal{A}} \sum_{t \in [T]} \prob{\vert \theta^*_i - \hat{\theta}_{i,t-1} \vert > \sqrt{\frac{8 \log T}{N_{t-1}(i)}}} =& \\
    1 - \sum_{i \in \mathcal{A}} \sum_{t \in [T]} \sum_{m = 1}^{t-1} \prob{\vert \theta^*_i - \Bar{v}_{i,m} \vert > \sqrt{\frac{8 \log T}{m}} \;\land\; N_{t-1}(i) = m} \geq& \\
    1 - \sum_{i \in \mathcal{A}} \sum_{t \in [T]} \sum_{m = 1}^{t-1} \prob{\vert \theta^*_i - \Bar{v}_{i,m} \vert > \sqrt{\frac{8 \log T}{m}}} \geq& \\
    \intertext{(Hoeffding's inequality, specifically version in Theorem A.2 of \cite{slivkins2019})}
    1 - \sum_{i \in \mathcal{A}} \sum_{t \in [T]} \sum_{m = 1}^{t-1} \frac{2}{T^4} \geq& \\
    \intertext{(We assume that $T \geq d$)}
    1 - \frac{2}{T} &
\end{align*}
\end{proof}

\begin{lemma}
\label{lem:bad_event_expectation}
For any $t \in [T]$ and $i \in \mathcal{A}$, we have

\begin{align*}
    \E{ \vert \theta^*_i - \hat{\theta}_{i,t-1} \vert - \sqrt{\frac{8 \log T}{N_{t-1}(i)}} \;\;\;\bigg\vert\;\;\; \vert \theta^*_i - \hat{\theta}_{i,t-1} \vert > \sqrt{\frac{8 \log T}{N_{t-1}(i)}}} \leq 1
\end{align*}
\end{lemma}
\begin{proof}

We know that the average feedback $\hat{\theta}_{i,t-1}$ is Gaussian with $\E{\hat{\theta}_{i,t-1}} = \theta^*_i$ and variance $\leq 1$. Let $Z := \theta^*_i - \hat{\theta}_{i,t-1}$. Then, $Z$ is Gaussian with mean $0$ and variance $\leq 1$. The following holds:

\begin{align*}
    \E{ \vert Z \vert - \sqrt{\frac{8 \log T}{N_{t-1}(i)}} \;\;\;\bigg\vert\;\;\; \vert Z \vert > \sqrt{\frac{8 \log T}{N_{t-1}(i)}}} =& \\
    \left(\E{ Z - \sqrt{\frac{8 \log T}{N_{t-1}(i)}} \;\;\;\bigg\vert\;\;\; Z > \sqrt{\frac{8 \log T}{N_{t-1}(i)}}} + \E{ (- Z) - \sqrt{\frac{8 \log T}{N_{t-1}(i)}} \;\;\;\bigg\vert\;\;\; (-Z) > \sqrt{\frac{8 \log T}{N_{t-1}(i)}}}\right)/2 =& \\
    \intertext{($Z$ is $0$-mean Gaussian, hence $Z \overset{d}{=} -Z$)}
    \E{ Z - \sqrt{\frac{8 \log T}{N_{t-1}(i)}} \;\;\;\bigg\vert\;\;\; Z > \sqrt{\frac{8 \log T}{N_{t-1}(i)}}} =& \\
    \E{ Z - \sqrt{\frac{8 \log T}{N_{t-1}(i)}} \;\;\;\bigg\vert\;\;\; Z - \sqrt{\frac{8 \log T}{N_{t-1}(i)}}> 0} &
\end{align*}

We notice that $(Z - \sqrt{\frac{8 \log T}{N_{t-1}(i)}})$ is Gaussian with mean $(- \sqrt{\frac{8 \log T}{N_{t-1}(i)}})$, where $(- \sqrt{\frac{8 \log T}{N_{t-1}(i)}}) < 0$. Furthermore (see e.g., Theorem 2 in \cite{horrace2015moments}), the expected value after truncation is increasing in $(- \sqrt{\frac{8 \log T}{N_{t-1}(i)}})$. Hence,

\begin{align*}
    \E{ Z - \sqrt{\frac{8 \log T}{N_{t-1}(i)}} \;\;\;\bigg\vert\;\;\; Z - \sqrt{\frac{8 \log T}{N_{t-1}(i)}}> 0} \leq& \\
    \E{ Z \;\;\;\bigg\vert\;\;\; Z > 0} \leq& \\
    \intertext{($Z$ is Gaussian with mean $0$ and variance $\leq 1$)}
    \phi(0) / (1 - \Phi(0)) \leq 1
\end{align*}
\end{proof}

\begin{theorem}
Given the optimal super arm $\bm{a^*}$ for Algorithm \ref{alg:exact_algorithm} and the optimal super arm $\Tilde{\bm{a}}^*$ for Algorithm \ref{alg:approximation_algorithm}, we have that $ f_{\bm{\theta}^*}(\Tilde{\bm{a}}^*) - f_{\bm{\theta}^*}(\bm{a}^*) \leq \sqrt{2 \log d}$.
\end{theorem}

\begin{proof}

For any super arm $\bm{a} \in \mathcal{I}$, let $Y_i$ for $i \in \bm{a}$ be Gaussian random variables with $Y_i \sim \mathcal{N}(\theta_i^*, \sigma_i^2)$. Let $W_i := Y_i - \theta_i^*$, such that $W_i \sim \mathcal{N}(0, \sigma_i^2)$. Then, the following holds:

\begin{align*}
    \mathbb{E}\left[ \max_{i\in \bm{a}} Y_i \right] =& \\ 
    \mathbb{E}\left[ \max_{i\in \bm{a}} (W_i + \theta_i^*) \right] \leq& \\
    \mathbb{E}\left[ \max_{i\in \bm{a}} \left(W_i + \max_{j\in \bm{a}} \mathbb{E}\left[ Y_j \right]\right) \right] =& \\
    \mathbb{E}\left[ \max_{i\in \bm{a}} W_i \right] + \max_{i\in \bm{a}} \mathbb{E}\left[ Y_i \right] \leq& \\
    \intertext{(By Lemma 9 in \cite{orabona2015optimal}, and $\sigma_i^2 \leq 1$ for all $i \in \bm{a}$)}
    \sqrt{2 \log d} + \max_{i\in \bm{a}} \mathbb{E}\left[ Y_i \right]
\end{align*}

Furthermore, by Jensen's inequality, we have $\max_{i\in \bm{a}} \mathbb{E}\left[ Y_i \right] \leq  \mathbb{E}\left[ \max_{i\in \bm{a}} Y_i \right]$. Moreover, by definition we know that $\bm{a}^* = \arg \min_{\bm{a} \in \mathcal{I}} \E{\max_{i \in \bm{a}} Y_i}$ and $\Tilde{\bm{a}}^* = \arg \min_{\bm{a} \in \mathcal{I}} \max_{i \in \bm{a}} \E{ Y_i}$. Consequently, we have:

\begin{align*}
    \max_{i \in \Tilde{\bm{a}}^*} \E{Y_i} \leq& \\
    \max_{i \in \bm{a}^*} \E{Y_i} \leq& \\
    \E{\max_{i \in \bm{a}^*} Y_i} \leq& \\
    \E{\max_{i \in \Tilde{\bm{a}}^*} Y_i} \leq& \\
    \sqrt{2 \log d} + \max_{i\in \Tilde{\bm{a}}^*} \mathbb{E}\left[ Y_i \right]
\end{align*}

Hence, we conclude

\begin{align*}
     f_{\bm{\theta}^*}(\Tilde{\bm{a}}^*) - f_{\bm{\theta}^*}(\bm{a}^*) = \E{\max_{i \in \Tilde{\bm{a}}^*} Y_i} - \E{\max_{i \in \bm{a}^*} Y_i} \leq \sqrt{2 \log d}
\end{align*}

\end{proof}

\bibliography{main}

\end{document}